\newenvironment{myalign}{\par\nobreak\small\noindent\align}{\endalign}
\newenvironment{myequation}{\par\nobreak\small\noindent\equation}{\endequation}
\newcommand{\bheading}[1]{{\vspace{2pt}\noindent{\textbf{#1}}\hspace{2pt}}} 
\newcommand{\tianwei}[1]{\textbf{\textcolor{red}{[Tianwei: #1]}}}
\newtheorem{definition}{Definition}
\newtheorem{theorem}{Theorem}
\newenvironment{packeditemize}{
\begin{list}{$\bullet$}{
\setlength{\labelwidth}{8pt}
\setlength{\itemsep}{0pt}
\setlength{\leftmargin}{\labelwidth}
\addtolength{\leftmargin}{\labelsep}
\setlength{\parindent}{0pt}
\setlength{\listparindent}{\parindent}
\setlength{\parsep}{0pt}
\setlength{\topsep}{3pt}}}{\end{list}}
\begin{document}
\title{Privacy-preserving Collaborative Learning with \\Automatic Transformation Search}  


\author {
Wei Gao$^{1,2}$, ~~ Shangwei Guo$^{3}$, ~~ Tianwei Zhang\thanks{Corresponding author.} $^{1}$, ~~ Han Qiu$^{4}$, ~~ Yonggang Wen$^{1}$, ~~ Yang Liu$^{1}$ \\
$^1${Nanyang Technological University,}  ~~ $^2${S-Lab, Nanyang Technological University,} \\ $^3${Chongqing University,} ~~ $^4${Tsinghua University} \\
  \tt\small{\{gaow0007,tianwei.zhang,ygwen,yangliu\}@ntu.edu.sg,swguo@cqu.edu.cn,qiuhan@tsinghua.edu.cn}
}

\maketitle

\begin{abstract}
Collaborative learning has gained great popularity due to its benefit of data privacy protection: participants can jointly train a Deep Learning model without sharing their training sets. However, recent works discovered that an adversary can fully recover the sensitive training samples from the shared gradients. Such reconstruction attacks pose severe threats to collaborative learning. Hence, effective mitigation solutions are urgently desired. 

In this paper, we propose to leverage data augmentation to defeat reconstruction attacks: by preprocessing sensitive images with carefully-selected transformation policies, it becomes infeasible for the adversary to extract any useful information from the corresponding gradients. We design a novel search method to automatically discover qualified policies. We adopt two new metrics to quantify the impacts of transformations on data privacy and model usability, which can significantly accelerate the search speed. Comprehensive evaluations demonstrate that the policies discovered by our method can defeat existing reconstruction attacks in collaborative learning, with high efficiency and negligible impact on the model performance. The code is available at \href{https://tinyurl.com/23kj554v}{https://tinyurl.com/23kj554v}.

\end{abstract}

\section{Introduction}
A collaborative learning system enables multiple participants to jointly train a shared Deep Learning (DL) model for a common artificial intelligence task \cite{yang2019federated,melis2019exploiting,guo2020towards}. Typical collaborative systems are distributed systems such as federated learning systems, where each participant iteratively calculates the local gradients based on his own training dataset and shares them with other participants to approach the ideal model. This collaborative mode can significantly improve the training speed, model performance and generalization. Besides, it can also protect the training data privacy, as participants do not need to release their sensitive data during the training phase. Due to these advantages, collaborative learning has become promising in many scenarios, e.g., smart manufacturing \cite{hao2019efficient}, autonomous driving \cite{niknam2020federated}, digital health \cite{brisimi2018federated}, etc.


Although each participant does not disclose the training dataset, he has to share with others the gradients, which can leak information of the sensitive data indirectly. 
Past works \cite{hitaj2017deep,melis2019exploiting,nasr2019comprehensive} demonstrated the possibility of membership inference and property inference attacks in collaborative learning. A more serious threat is the \emph{reconstruction attack} \cite{zhu2019deep,zhao2020idlg,geiping2020inverting}, where an adversary can recover the exact values of samples from the shared gradients with high fidelity. This attack is very practical under realistic and complex circumstances (e.g., large-size images, batch training). 


Due to the severity of this threat, an effective and practical defense solution is desirable to protect the privacy of collaborative learning. Common privacy-aware solutions \cite{zhu2019deep,wei2020framework} attempt to increase the difficulty of input reconstruction by obfuscating the gradients. However, the obfuscation magnitude is bounded by the performance requirement of the DL task: a large-scale obfuscation can hide the input information, but also impair the model accuracy. The effectiveness of various techniques (e.g., noise injection, model pruning) against reconstruction attacks have been empirically evaluated \cite{zhu2019deep}. Unfortunately, they cannot achieve a satisfactory trade-off between data privacy and model usability, and hence become less practical.



Motivated by the limitations of existing solutions, this paper aims to solve the privacy issue from a different perspective: \emph{obfuscating the training data to make the reconstruction difficult or infeasible.} The key insight of our strategy is to \emph{repurpose data augmentation techniques for privacy enhancement}. A variety of transformation approaches have been designed to improve the model performance and generalization. We aim to leverage certain transformation functions to preprocess the training sets and then train the gradients, which can prevent malicious participants from reconstructing the transformed or original samples.



Mitigating reconstruction attacks via data augmentation is challenging. First, existing image transformation functions are mainly used for performance and generalization improvement. It is unknown which ones are effective in reducing information leakage. Second, conventional approaches apply these transformations to augment the training sets, where original data samples are still kept for model training. This is relatively easier to maintain the model performance. In contrast, to achieve our goal, we have to abandon the original samples, and only use the transformed ones for training, which can impair the model accuracy.


We introduce a systematic approach to overcome these challenges. 
Our goal is to automatically discover an ensemble of effective transformations from a large collection of commonly-used data augmentation functions. This ensemble is then formed as a transformation policy, to preserve the privacy of collaborative learning. Due to the large search space and training overhead, it is computationally infeasible to evaluate the privacy and performance impacts of each possible policy. Instead, we design two novel metrics to quantify the policies without training a complete model. These metrics with our new search algorithm can identify the optimal policies within 2.5 GPU hours. 


The identified transformation policies exhibit great capability of preserving the privacy while maintaining the model performance. They also enjoy the following properties: (1) the policies are general and able to defeat different variants of reconstruction attacks. (2) The input transformations are performed locally without modifying the training pipeline. They are applicable to any collaborative learning systems and algorithms. (3) The transformations are lightweight with negligible impact on the training efficiency. (4) The policies have high transferability: the optimal policy searched from one dataset can be directly applied to other datasets as well. 










\section{Related Work}
\subsection{Reconstruction Attacks}
In collaborative learning, reconstruction attacks aim to recover the training samples from the shared gradients. Zhu et al. \cite{zhu2019deep} first formulated this attack as an optimization process: the adversarial participant searches for the optimal samples in the input space that can best match the gradients. L-BFGS~\cite{liu1989limited} was adopted to realize this attack. Following this work, several improved attacks were further proposed to enhance the attack effects and reduce the cost. For instance, Zhao et al. \cite{zhao2020idlg} first extracted the training labels from the gradients, and then recovered the training samples with higher convergence speed. Geiping et al. \cite{geiping2020inverting} adopted the cosine similarity as the distance function and Adam as the optimizer to solve the optimization problem, which can yield more precise reconstruction results. He et al. \cite{he2019model,he2020attacking} proposed reconstruction attacks against collaborative inference systems, which are not considered in this paper.

\subsection{Existing Defenses and Limitations}
One straightforward defense strategy is to obfuscate the gradients before releasing them, in order to make the reconstruction difficult or infeasible. Differential privacy is a theoretical framework to guide the randomization of exchange information~\cite{abadi2016deep, lecuyer2019certified, phan2020scalable,guo2020differentially}.  For instance, Zhu et al. \cite{zhu2019deep} tried to add Gaussian/Laplacian noises guided by differential privacy to the gradients, and compress the model with gradient pruning. Unfortunately, there exists an unsolvable conflict between privacy and usability in these solutions: a large-scale obfuscation can compromise the model performance while a small-scale obfuscation still leaks certain amount of information. Such ineffectiveness of these methods was validated in \cite{zhu2019deep}, and will be further confirmed in this paper  (Table \ref{tab:defense}). Wei et al. \cite{wei2020framework} proposed to adjust the hyper-parameters (e.g. bath size, loss or distance function), which also has limited impact on the attack results. 


An alternative direction is to design new collaborative learning systems to thwart the reconstruction attacks. Zhao et al. \cite{zhao2020privatedl} proposed a framework that transfers sensitive samples to public ones with privacy protection, based on which the participants can collaboratively update their local models with noise‐preserving labels. Fan et al. \cite{fan2020rethinking} designed a secret polarization network for each participant to produce secret losses and calculate the gradients. 
These approaches require all participants to follow the new training pipelines or optimization methods. They cannot be directly applied to existing collaborative implementations. This significantly restricts their practicality.



\section{Problem Statement}
\subsection{System Model}
We consider a standard collaborative learning system where all participants jointly train a global model $M$. Each participant owns a private dataset $D$. Let $\mathcal{L}, W$ be the loss function and the parameters of $M$, respectively. At each iteration, every participant randomly selects a training sample $(x, y)$, 
calculates the loss $\mathcal{L}(x, y)$ by forward propagation and then the gradient $\bigtriangledown W(x, y) = \frac{\partial \mathcal{L}(x, y)}{\partial W}$ using backward propagation. The participants can also use the mini-batch SGD, where a mini-batch of samples are randomly selected to train the gradient at each iteration. 

Gradients need to be consolidated at each iteration. In a centralized system, a parameter server aggregates all the gradients, and sends the updated one to each participant. In a decentralized system, each participant aggregates the gradients from his neighbors, and then broadcasts the results. 



\subsection{Attack Model}

We consider a honest-but-curious adversarial entity in the collaborative learning system, who receives other participants' gradients in each iteration, and tries to reconstruct the private training samples from them. In the centralized mode, this adversary is the parameter server, while in the decentralized mode, the adversary can be an arbitrary participant. 

Common reconstruction techniques \cite{zhu2019deep,zhao2020idlg,geiping2020inverting} adopt different optimization algorithms to extract training samples from the gradients. Specifically, given a gradient $\bigtriangledown W(x, y)$, the attack goal is to discover a pair of sample and label $(x', y')$, such that the corresponding gradient $\bigtriangledown W(x', y')$ is very close to $\bigtriangledown W$. This can be formulated as an optimization problem of minimizing the objective:

\newcommand{\argmin}{\operatornamewithlimits{argmin}} 
\begin{equation}
\label{eq:attack}
	x^*, y^* = \argmin_{x', y'} \quad  ||\bigtriangledown W(x, y) - \bigtriangledown W(x', y')||,
\end{equation}
where $||\cdot||$ is a norm for measuring the distance between the two gradients. A reconstruction attack succeeds if the identified $x^*$ is visually similar to $x$. This can be quantified by the metric of Peak Signal-to-Noise Ratio (PSNR)~\cite{hore2010image}. Formally, a reconstruction attack is defined as below:

\begin{definition}(($\epsilon, \delta$)-Reconstruction Attack)
    Let ($x^*$, $y^*$) be the solution to Equation \ref{eq:attack}, and ($x$, $y$) be the target training sample that produces $\bigtriangledown W(x, y)$. This process is called a ($\epsilon, \delta$)-reconstruction attack if the following property is held:
    \begin{equation}
        \emph{Pr}[\emph{\texttt{PSNR}}(x^*, x)\geq \epsilon] \geq 1-\delta.
    \end{equation}
\end{definition}

\section{Methodology}
\subsection{Overview}

Driven by the severity of reconstruction attacks, and limitations of existing defenses, we focus on a new mitigation opportunity in this paper: \emph{transforming the sensitive training samples to make the reconstruction difficult or even infeasible}. Image transformation has been widely adopted to mitigate adversarial examples \cite{qiu2020fencebox,qiu2020mitigating,qiu2020towards}, backdoor attacks \cite{zeng2020deepsweep}, and attack watermarking schemes \cite{guo2020hidden}. We repurpose it for defeating reconstruction attacks. Specifically, given a private dataset $D$, we aim to find a policy composed of a set of transformation functions $T = t_1 \circ t_2 \circ ... \circ t_n$, to convert each sample $x\in D$ to $\widehat{x} = T(x)$ and establish a new dataset $\widehat{D}$. The data owner can use $\widehat{D}$ to calculate the gradients and safely share them with untrusted collaborators in collaborative learning. Such a transformation policy must satisfy two requirements: (1) the adversarial participant is not able to infer $\widehat{x}$ (and $x$) from $\bigtriangledown W(\widehat{x}, y)$. (2) The final model should maintain similar performance as the one trained from $D$. 
We formally define our strategy as below:

\begin{definition}(($\epsilon, \delta, \gamma$)-Privacy-aware Transformation Policy)
    Given a dataset $D$, and an ensemble of transformations $T$, let $\widehat{D}$ be another dataset transformed from $D$ with $T$. Let $M$ and $\widehat{M}$ be the models trained over $D$ and $\widehat{D}$, respectively. $T$ is defined to be ($\epsilon, \delta, \gamma$)-privacy-aware, if the following two requirements are met:
    \begin{myalign}
        & \emph{Pr}[\emph{\texttt{PSNR}}(x^*, \widehat{x})< \epsilon] \geq 1-\delta, \forall x \in D, \\
        & \emph{\texttt{ACC}}(M) - \emph{\texttt{ACC}}(\widehat{M}) < \gamma,
    \end{myalign}
    where $\widehat{x}=T(x)$, $x^*$ is the reconstructed input from $\bigtriangledown W(\widehat{x}, y)$, and \emph{\texttt{ACC}} is the prediction accuracy function.
\end{definition}

It is critical to identify the transformation functions that can satisfy the above two requirements. With the advance of computer vision, different image transformations have been designed for better data augmentation. We aim to repurpose some of these data augmentation approaches to enhance the privacy of collaborative learning. 

Due to the large quantity and variety of augmentation functions, we introduce a systematic and automatic method to search for the most privacy-preserving and efficient policy. Our idea is inspired by AutoAugment \cite{iccv2019autoaugment}, which exploited AutoML techniques~\cite{wistuba2019survey} to automatically search for optimal augmentation policies to improve the model accuracy and generalization. However, it is difficult to apply this solution directly to our privacy problem. We need to address two new challenges: (1) how to efficiently evaluate the satisfaction of the two requirements for each policy (Sections \ref{sec:performance-metric} and \ref{sec:search-train}); and (2) how to select the appropriate search space and sampling method (Section \ref{sec:search-train}).

\subsection{Privacy Score}
\label{sec:privacy-metric}

During the search process, we need to quantify the privacy effect of the candidate policies. The PSNR metric is not efficient here, as it requires to perform an end-to-end reconstruction attack over a well-trained model. Instead, we design a new privacy score, which can accurately reflect the privacy leakage based on the transformation policy and a semi-trained model which is trained for only a few epochs. 

We first define a metric \texttt{GradSim}, which measures the gradient similarity of two input samples ($x_1$, $x_2$) with the same label $y$:

\begin{myequation}
	\texttt{GradSim}(x_{1}, x_{2}) = \frac{\langle \bigtriangledown W(x_1, y),  \bigtriangledown W(x_2, y) \rangle} 
										{|| \bigtriangledown W(x_1, y) || \cdot || \bigtriangledown W(x_2, y) ||}.
\label{eq:gradsim}
\end{myequation}

Assume the transformed image is $\widehat{x}$, which the adversary tries to reconstruct. He starts from a random input $x'=x_0$, and updates $x'$ iteratively using Equation \ref{eq:attack} until $\bigtriangledown W(x', y)$ approaches $\bigtriangledown W(\widehat{x}, y)$. Figure~\ref{fig:motivation} visualizes this process: the y-axis is the gradient similarity $\texttt{GradSim}(x', \widehat{x})$, and x-axis is $i\in[0, 1]$ such that $x'=(1-i)*x_0 + i*\widehat{x}$. The optimization starts with $i=0$ (i.e., $x=x_0$) and ideally completes at $i=1$ (i.e., $x'=\widehat{x}$ and $\texttt{GradSim}=1$).

\begin{figure} [t]
   \centering
    \begin{tikzpicture}
        \node[anchor=south west,inner sep=0] at (0,0){\includegraphics[ width=0.9\linewidth]{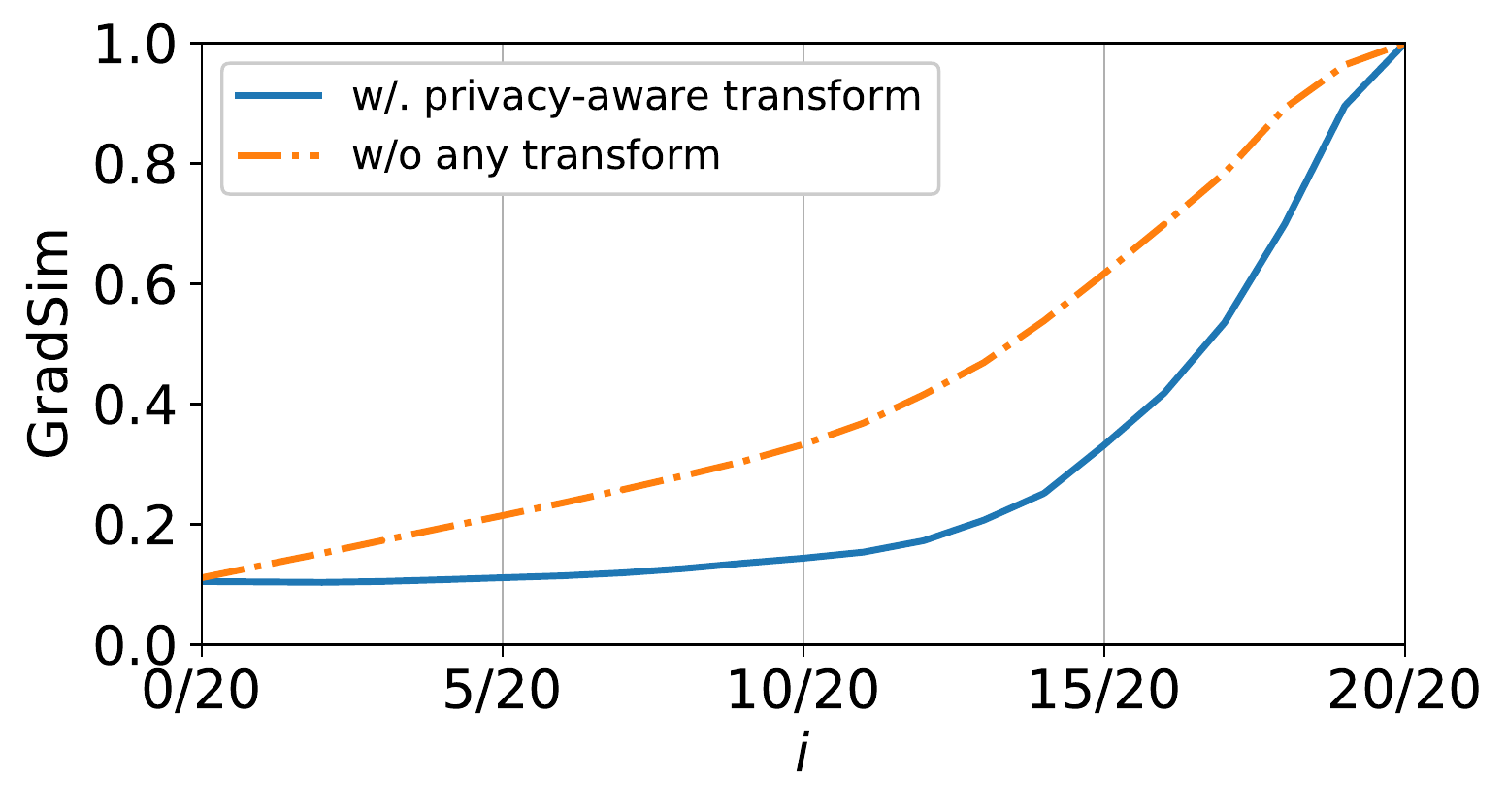}}; 
        \node at (2.1, 1.9){\includegraphics[width=0.08\linewidth]{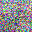}};
        \node at (3.6, 2.2){\includegraphics[width=0.08\linewidth]{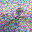}};
        \node at (5.15, 3){\includegraphics[width=0.08\linewidth]{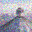}};
        \node at (6.0, 3.5){\includegraphics[width=0.08\linewidth]{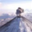}};
        
        \node at (2.9, 1.2){\includegraphics[width=0.08\linewidth]{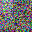}};
        \node at (4.4, 1.3){\includegraphics[width=0.08\linewidth]{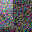}};
        \node at (5.9, 1.7){\includegraphics[width=0.08\linewidth]{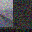}};
        \node at (6.9, 3){\includegraphics[width=0.08\linewidth]{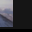}};
    \end{tikzpicture} 
    \vspace{-1em}
	\caption{Visualization of the optimization process in reconstruction attacks.}
	\vspace{-1em}
    \label{fig:motivation}
\end{figure}

A good policy can thwart the convergence from $x_0$ to $\widehat{x}$. As shown in Figure~\ref{fig:motivation} (blue solid line), $\texttt{GradSim}$ is hardly changed with $i$ initially from $x_0$. This reveals the difficulty of the adversary to find the correct direction towards $\widehat{x}$ based on the gradient distance. In contrast, if the collaborative learning system does not employ any transformation function (red dash line), $\texttt{GradSim}$ is increased stably with $i$. This gives the adversary an indication to discover the correct moving direction, and steadily make $x'$ approach $x$ by minimizing the gradient distance.

Based on this observation, we use the area under the $\texttt{GradSim}$ curve to denote the effectiveness of a transformation policy in reducing privacy leakage. A good transformation policy will give a small area as the $\texttt{GradSim}$ curve is flat for most values of $i$ until there is a sharp jump when $i$ is close to 1. In contrast, a leaky learning system has a larger area as the $\texttt{GradSim}$ curve increases gradually with $i$. 
Formally, our privacy score is defined as below:

\begin{myalign}
	& S_{pri}(T) = \frac{1}{|D|} \sum_{x\in D} \int_{0}^{1} \texttt{GradSim}(x'(i), T(x)) di, \nonumber \\
	& x'(i) = (1-i) * x_0 + i * T(x).
\label{eq:privacy-score1}
\end{myalign}

For simplicity, we can approximate this score as a numeric integration, which is adopted in our implementation:


\begin{myalign}
	S_{pri}(T) \approx \frac{1}{|D|K} \sum_{x\in D} \sum_{j=0}^{K-1} \texttt{GradSim}(x'(\frac{j}{K}), T(x)).
\label{eq:privacy-score}
\end{myalign}


\noindent\textbf{Empirical validation.}
We also run some experiments to empirically verify the correlation between $S_{pri}$ and PSNR. Specifically, we randomly select 100 transformation policies, and apply each to the training set. For each policy, we collect the PSNR value by performing the reconstruction attack \cite{geiping2020inverting} with a reduced iteration of 2500. We also measure the privacy score using Equation \ref{eq:privacy-score}. As shown in Figure \ref{fig:privacy-distribution}, $S_{pri}$ is linearly correlated to PSNR with a Pearson Correlation Coefficient of 0.697. This shows that we can use $S_{pri}$ to quantify the attack effects. 


\renewcommand{\multirowsetup}{\centering}
\begin{figure} [t]
    \begin{center}
    \begin{tabular}{c}
        \includegraphics[ width=0.9\linewidth]{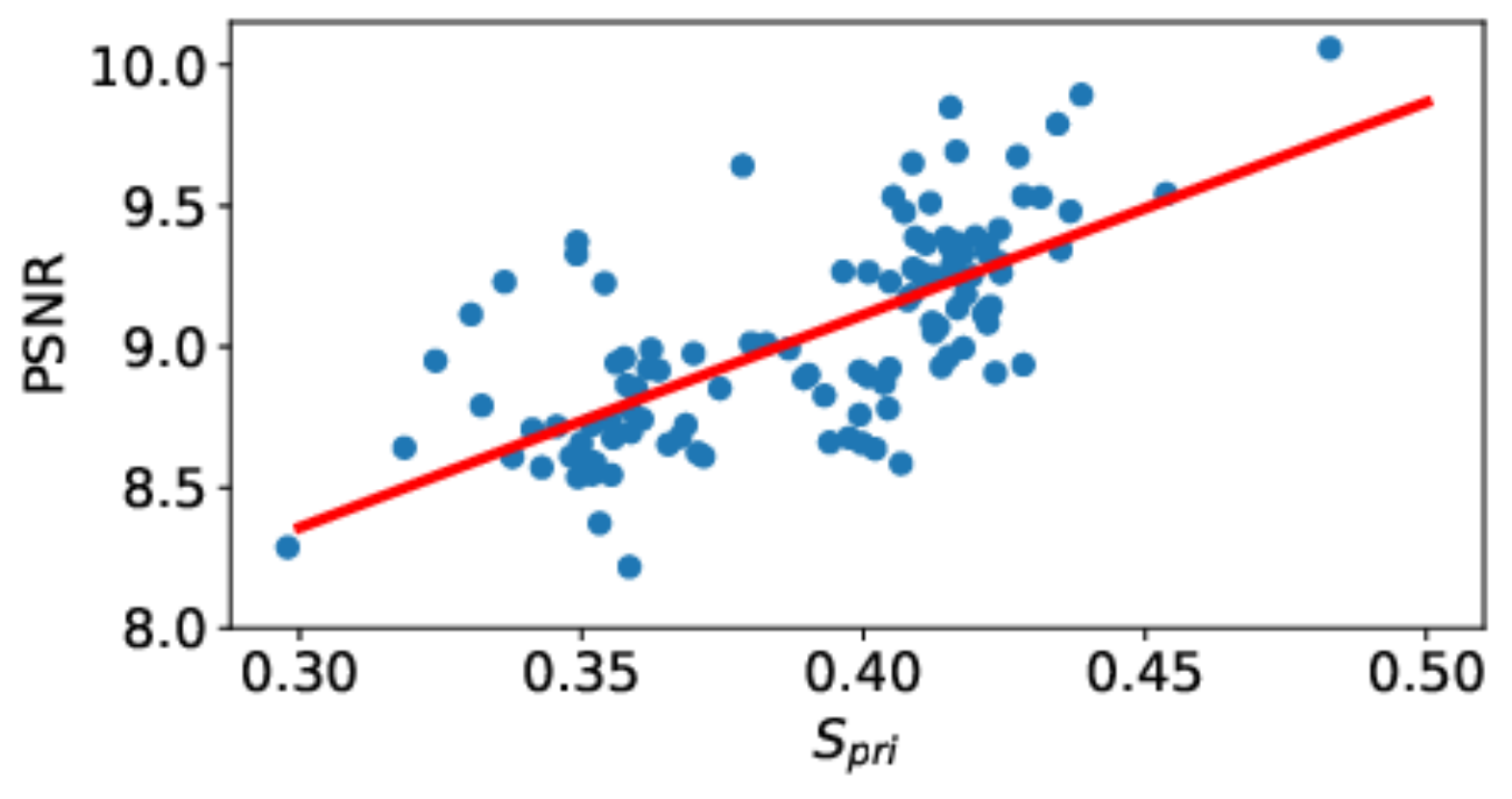}
    \end{tabular}
    \end{center}
    \vspace{-2em}
	\caption{Correlation between PSNR and $S_{pri}$.}
	\vspace{-1em}
    \label{fig:privacy-distribution}
\end{figure}




\subsection{Accuracy Score}
\label{sec:performance-metric}
Another important requirement for a qualified policy is to maintain model accuracy. Certain transformations introduce large-scale perturbations to the samples, which can impair the model performance. We expect to have an efficient and accurate criterion to judge the performance impact of each transformation policy during the search process. 

Joseph \etal~\cite{mellor2020neural} proposed a novel technique to search neural architectures without model training. It empirically evaluates the correlations between the local linear map and the architecture performance, and identifies the maps that yield the best performance. Inspired by this work, we adopt the technique to search for the transformations that can preserve the model performance.


Specifically, we prepare a randomly initialized model $f$, and a mini-batch of data samples transformed by the target policy $T$: $\{\widehat{x}_{n}\}_{n=1}^{N}$. We first calculate the Gradient Jacobian matrix, as shown below: 

\begin{myequation}
J = 
\begin{pmatrix}
\frac{\partial f}{\partial \widehat{x}_{1}}, & \frac{\partial f}{\partial \widehat{x}_{2}}, & \cdots & \frac{\partial f}{\partial \widehat{x}_{N}} \\
\end{pmatrix}^{\top}.
\label{eq:J}
\end{myequation}

Then we compute its correlation matrix:

\begin{myalign}
	& \left(M_J\right)_{ i, j} = \frac{1}{N} \sum_{n=1}^NJ_{i, n}, \nonumber \\
	& C_J = (J-M_J)(J-M_J)^T, \nonumber \\
	& \left(\Sigma_J\right)_{i, j} = \frac{\left(C_J\right)_{i, j}}{\sqrt{\left(C_J\right)_{i, i} \cdot \left(C_J\right)_{j, j} }}.
\label{eq:cor-matrix}
\end{myalign}

Let $\sigma_{J, 1} \leq \dots \leq \sigma_{J, N}$ be the $N$ eigenvalues of $\Sigma_J$. 
Then our accuracy score is given by

\begin{equation}
	S_{acc}(T)=\frac{1}{N}\sum_{i=0}^{N-1} {log(\sigma_{J,i} + \epsilon) + (\sigma_{J,i} + \epsilon)^{-1}},
\label{eq:accuracy-score}
\end{equation}
where $\epsilon$ is set as $10^{-5}$ for numerical stability. This accuracy score can be used to quickly filter out policies which incur unacceptable performance penalty to the model. 


%

\subsection{Searching and Applying Transformations}
\label{sec:search-train}
We utilize the privacy and accuracy scores to identify the optimal policies, and apply them to collaborative training. 

\noindent\textbf{Search space.} We consider the data augmentation library adopted by AutoAugment \cite{iccv2019autoaugment,pool}. This library contains 50 various image transformation functions, including rotation, shift, inversion, contrast, posterization, etc. 
We consider a policy combining at most $k$ functions. This leads to a search space of $\sum_{i=1}^k 50^i$. Instead of iterating all the policies, we only select and evaluate $C_{max}$ policies. For instance, in our implementation, we choose $k=3$, and the search space is $127,550$. We set $C_{max}=1,500$, which is large enough to identify qualified policies. 

\noindent\textbf{Search algorithm.} 
Various AutoML methods have been designed to search for the optimal architecture, 
e.g., importance sampling~\cite{neal2001annealed}, evolutionary sampling~\cite{kwok2005evolutionary}, reinforcement learning-based sampling~\cite{zoph2016neural}. We adopt a simple \emph{random} search strategy, which is efficient and effective in discovering the optimal policies. 

Algorithm \ref{alg:search} illustrates our search process. Specifically, we aim to identify a policy set $\mathcal{T}$ with $n$ qualified policies. We need to prepare two local models: (1) $M^{s}$ is used for privacy quantification. It is trained only with 10\% of the original training set for 50 epochs. This overhead is equivalent to the training with the entire set for 5 epochs, which is very small. (2) $M^{r}$ is a randomly initialized model without any optimization, which is used for accuracy quantification. 
We randomly sample $C_{max}$ policies, and calculate the privacy and accuracy scores of each policy. The policies with accuracy scores lower than a threshold $T_{acc}$ will be filtered out. We select the top-$n$ policies based on the privacy score to form the final policy set $\mathcal{T}$. 



\begin{algorithm}[htb]
	\caption{Searching optimal transformations.}\label{alg:search} 
	\SetKwInOut{Input}{Input}
	\SetKwInOut{Output}{Output} 
	    \Input{Augmentation library $\mathcal{P}$, $T_{acc}$, $C_{max}$, $M^s$, $M^r$, $D$}
		\Output{Optimal policy set $\mathcal{T}$ with $n$ policies}
        
		\For{$i \in [1, C_{max}]$ \label{line:init}}{
		    Sample functions from $\mathcal{P}$ to form a policy $T$\;
		    Calculate $S_{acc}(T)$ from $M^r$, $D$ (Eq. \ref{eq:accuracy-score})\;
		    \If{$S_{acc}(T) \geq T_{acc}$}{
		        \eIf{$|\mathcal{T}|<n$}{
		            Insert $T$ to $\mathcal{T}$\;
		        }{
		            Calculate $S_{pri}(T)$ from $M^s$, $D$ (Eq. \ref{eq:privacy-score})\;
		            $T^* \gets \text{argmax}_{T'\in \mathcal{T}}S_{pri}(T')$\;
		            \If{$S_{pri}(T)<S_{pri}(T^*)$}{
		                Replace $T^*$ with $T$ in $\mathcal{T}$\;
		            }
		        }
		    }
		}
		\If{$|\mathcal{T}|<n$}{
            Go to Line \ref{line:init}\;	
		}
		\Return $\mathcal{T}$
%
\end{algorithm}


\noindent\textbf{Applying transformations.}
With the identified policy set $\mathcal{T}$, we can apply the functions over the sensitive training data. One possible solution is to always pick the policy with the smallest $S_{pri}$, and apply it to each sample. However, a single fixed policy can incur domain shifts and bias in the input samples. This can impair the model performance although we have tested it with the accuracy metric. 

Instead, we can adopt a hybrid augmentation strategy which is also used in \cite{iccv2019autoaugment}: we randomly select a transformation policy from $\mathcal{T}$ to preprocess each data sample. All the selected transformation policies cannot have common transformation functions. This can guarantee low privacy leakage and high model accuracy. Besides, it can also improve the model generalization and eliminate domain shifts.

\section{Experiments}
\label{sec:exp}
\renewcommand{\multirowsetup}{\centering}

\subsection{Implementation and Configurations}
\noindent\textbf{Datasets and models.} Our approach is applicable to various image datasets and classification models. Without loss of generality, we choose two datasets (CIFAR100~\cite{krizhevsky2009learning}, Fashion MNIST~\cite{xiao2017fashion}) and two conventional DNN models (ResNet20~\cite{he2016deep}, 8-layer ConvNet). These were the main targets of reconstruction attacks in prior works.

\noindent\textbf{System and attack implementation.}
We implement a collaborative learning system with ten participants, where each one owns a same number of training samples from the same distribution. They adopt the SGD optimizer with momentum, weight decay and learning decay techniques to guarantee the convergence of the global model. 


Our solution is able to thwart all existing reconstruction attacks with their variants. We evaluate six attacks in our experiments, named in the format of ``optimizer+distance measure''. These techniques\footnote{The attack in \cite{zhao2020idlg} inherited the same technique from \cite{zhu2019deep}, with a smaller computational cost. So we do not consider it in our experiments.} cover different optimizers and distance measures: (1) LBFGS+L2 \cite{zhu2019deep}; (2) Adam+Cosine \cite{geiping2020inverting}; (3) LBFGS+Cosine; (4) Adam+L1; (5) Adam+L2; (6) SGD+Cosine. It is straightforward that the reconstruction attacks become harder with larger batch sizes. To fairly evaluate the defenses, we consider the strongest attacks where the batch size is 1. 

\noindent\textbf{Defense implementation.}
We adopt the data augmentation library \cite{pool}, which contains 50 various transformations. 
We consider a policy with maximum 3 functions concatenated. It is denoted as $i-j-k$, where $i$, $j$, and $k$ are the function indexes from \cite{pool}. Note that 
index values can be the same, indicating the same function is applied multiple times. 


We implement the following defenses as the baseline. 

\begin{packeditemize}
    \item \emph{Gaussian/Laplacian}: using differential privacy to obfuscate the gradients with Gaussian or Laplacian noise. For instance, Gaussian($10^{-3}$) suggests a noise scale of $N(0, 10^{-3})$.
    \item \emph{Pruning}: adopting the layer-wise pruning technique~\cite{dutta2019discrepancy} to drop parameter gradients whose absolute values are small. For instance, a compression ratio of 70\% means for each layer, the gradients are set as zero if their absolute values rank after the top-$30\%$. 
    \item \emph{Random augmentation}: we randomly sample transformation functions from \cite{pool} to form a policy. For each experiment, we apply 10 different random policies to obtain the average results.  
\end{packeditemize}


We adopt PSNR to measure the visual similarity between the attacker's reconstructed samples and transformed samples, as the attack effects. 
We measure the trained model's accuracy over the corresponding validation dataset to denote the model performance. 

\noindent\textbf{Testbed configuration.} We adopt PyTorch framework~\cite{paszke2019pytorch} to realize all the implementations. All our experiments are conducted on a server equipped with one NVIDIA Tesla V100 GPU and 2.2GHz Intel CPUs. 

\subsection{Search and Training Overhead}


\noindent\textbf{Search cost.}
For each transformation policy under evaluation, we calculate the average $S_{pri}$ of 100 images randomly sampled from the validation set\footnote{The first 100 images in the validation set are used for attack evaluation, not for $S_{pri}$ calculation.}. We also calculate $S_{acc}$ with 10 forward-background rounds. We run 10 search jobs in parallel on one GPU. Each policy can be evaluated within 1 minutes. Evaluation of all $C_{max}=1,500$ policies can be completed within 2.5 hours. The entire search overhead is very low. In contrast, the attack time of reconstructing 100 images using \cite{geiping2020inverting} is about 10 GPU hours.

\noindent\textbf{Training cost.}
Applying the searched policies to the training samples can be conducted offline. So we focus on the online training performance. We train the ResNet20 model on CIFAR100 with 200 epochs. Figure \ref{fig:training-analysis} reports the accuracy and loss over the training and validation sets with and without our transformation policies. We can observe that although the transformation policies can slightly slow down the convergence speed on the training set, the speeds on the validation set are identical. This indicates the transformations incur negligible overhead to the training process.

\begin{figure}
    \centering
    \begin{subfigure}[b]{0.49\linewidth}
        \centering
        \includegraphics[ width=\linewidth]{ 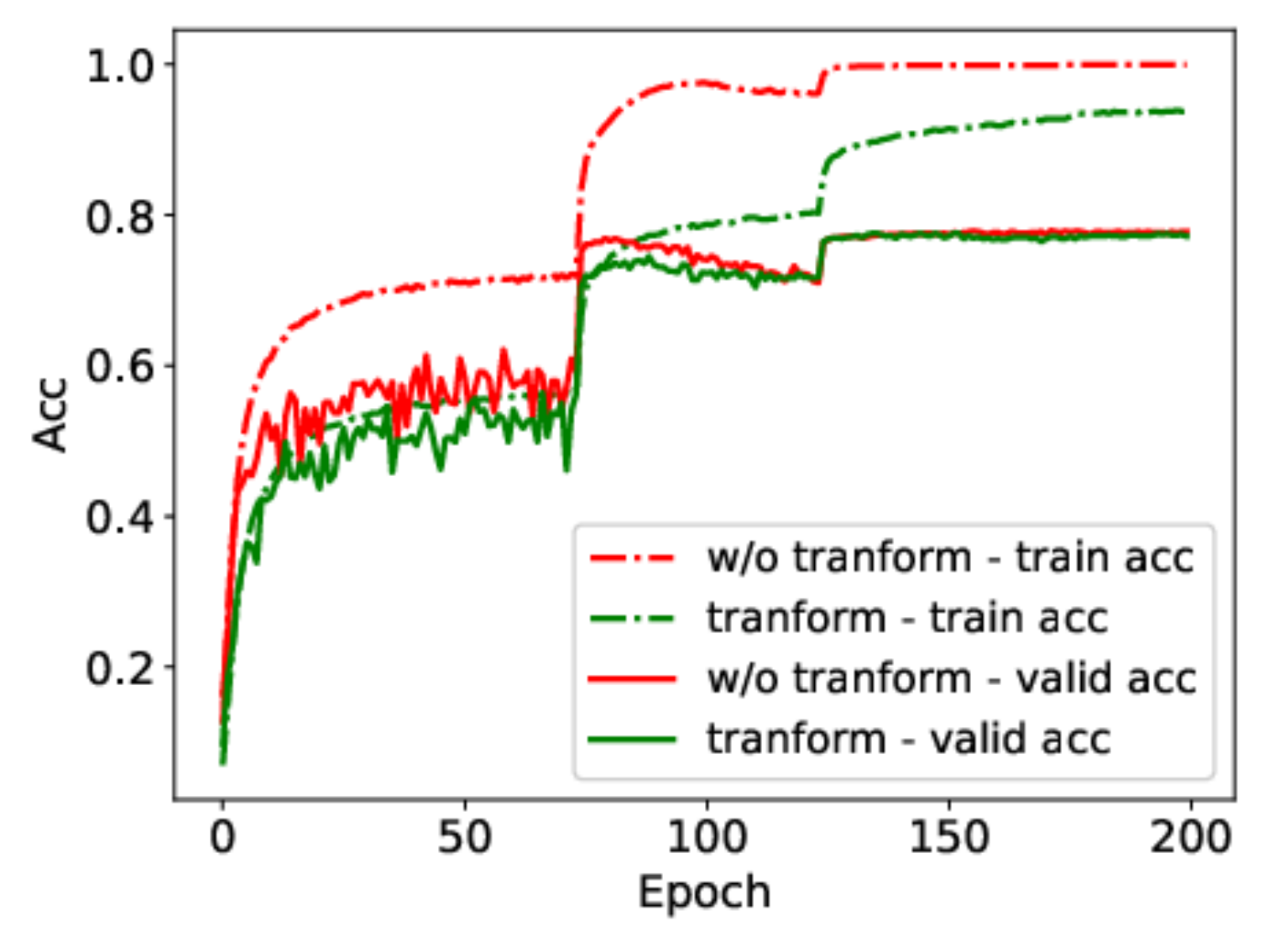}
        \caption{Accuracy}
        \label{fig:accuracy}
     \end{subfigure}    
    \begin{subfigure}[b]{0.49\linewidth}
        \centering
        \includegraphics[ width=\linewidth]{ 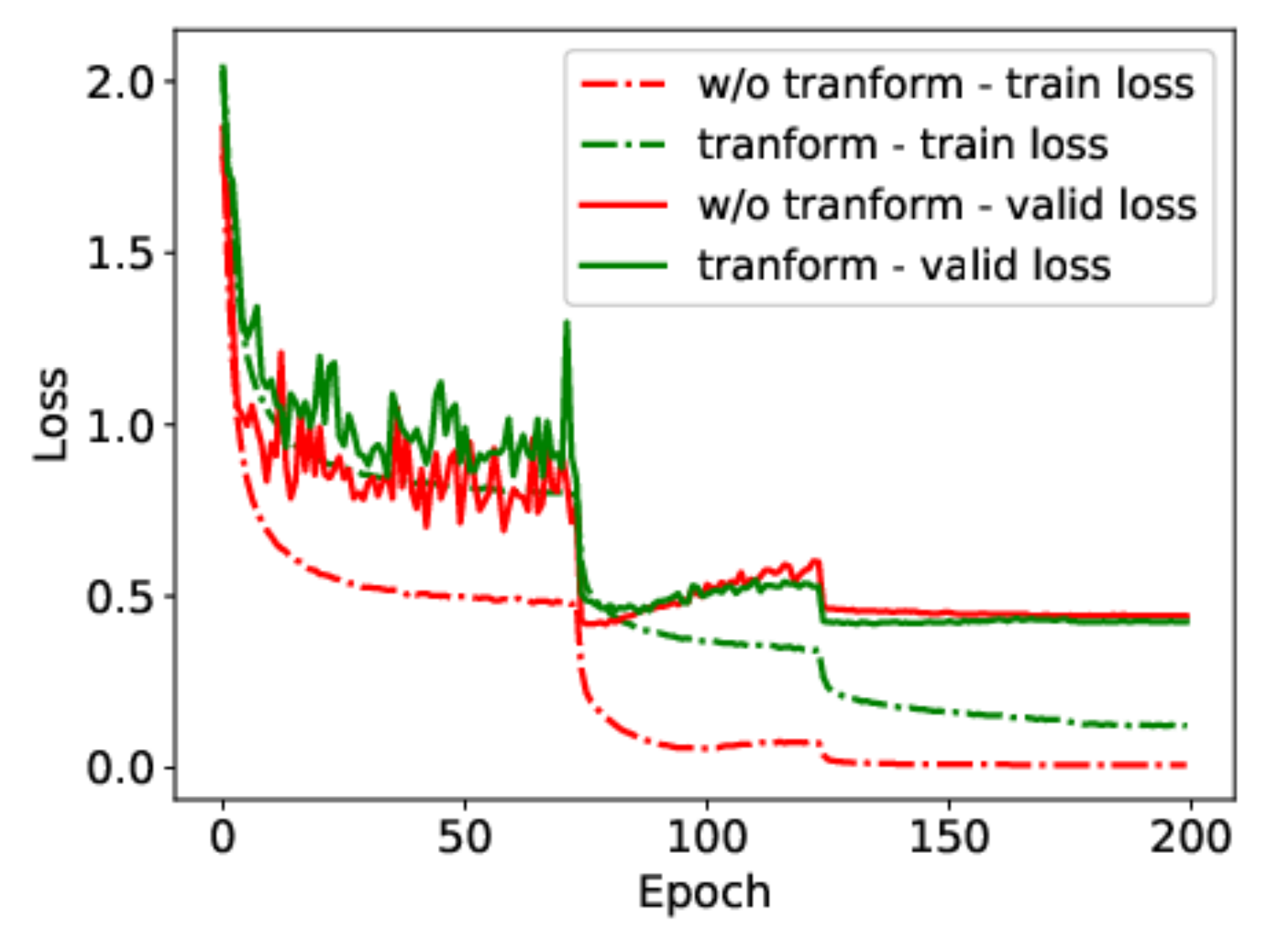}
        \caption{Loss}
        \label{fig:loss}
     \end{subfigure}    
    \caption{Model performance of ResNet20 on CIFAR100 during the training process.} 
    \label{fig:training-analysis}
\end{figure}



\subsection{Effectiveness of the Searched Policies}

As an example, Figure~\ref{fig:overview} illustrates the visual comparison of the reconstructed images with and without the searched policies under the Adam+Cosine attack \cite{geiping2020inverting} for the two datasets. We observe that without any transformations, the adversary can recover the images with very high fidelity (row 2). In contrast, after the training samples are transformed (row 3), the adversary can hardly obtain any meaningful information from the recovered images (row 4). We have similar results for other attacks as well. 

\begin{figure*} [t]
    \begin{center}
    \begin{tabular}{c@{\hspace{0.01\linewidth}}c@{\hspace{0.01\linewidth}}c@{\hspace{0.03\linewidth}} c@{\hspace{0.01\linewidth}}c@{\hspace{0.01\linewidth}}c@{\hspace{0.03\linewidth}} c@{\hspace{0.01\linewidth}}c@{\hspace{0.01\linewidth}}c@{\hspace{0.03\linewidth}} c@{\hspace{0.01\linewidth}}c@{\hspace{0.01\linewidth}}c@{\hspace{0.01\linewidth}}   }
        \includegraphics[ width=0.07\linewidth]{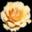} &
        \includegraphics[ width=0.07\linewidth]{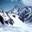} &
        \includegraphics[ width=0.07\linewidth]{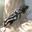} &
        \includegraphics[ width=0.07\linewidth]{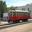} &
        \includegraphics[ width=0.07\linewidth]{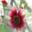} &
        \includegraphics[ width=0.07\linewidth]{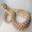} &
        \includegraphics[ width=0.07\linewidth]{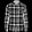} &
        \includegraphics[ width=0.07\linewidth]{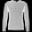} &
        \includegraphics[ width=0.07\linewidth]{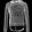} &
        \includegraphics[ width=0.07\linewidth]{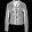} &
        \includegraphics[ width=0.07\linewidth]{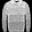} &
        \includegraphics[ width=0.07\linewidth]{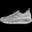}  \\
        
        \includegraphics[ width=0.07\linewidth]{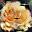} &
        \includegraphics[ width=0.07\linewidth]{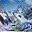} &
        \includegraphics[ width=0.07\linewidth]{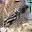} &
        \includegraphics[ width=0.07\linewidth]{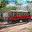} &
        \includegraphics[ width=0.07\linewidth]{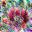} &
        \includegraphics[ width=0.07\linewidth]{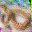} &
        \includegraphics[ width=0.07\linewidth]{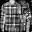}& 
        \includegraphics[ width=0.07\linewidth]{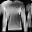} &
        \includegraphics[ width=0.07\linewidth]{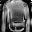} &  
        \includegraphics[ width=0.07\linewidth]{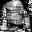}  &
        \includegraphics[ width=0.07\linewidth]{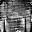} &
        \includegraphics[ width=0.07\linewidth]{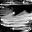} \\
        
        13.97dB &
        12.14dB &
        16.32dB &
        15.61dB &
        11.96dB &
        14.83dB &
        10.42dB &
        9.65dB &
        13.57dB &
        9.52dB &
        9.49dB &
        9.17dB \\
        
        \includegraphics[ width=0.07\linewidth]{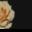} & 
        \includegraphics[ width=0.07\linewidth]{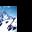} &
        \includegraphics[ width=0.07\linewidth]{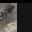} &
        \includegraphics[ width=0.07\linewidth]{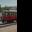} &
        \includegraphics[ width=0.07\linewidth]{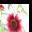} &
        \includegraphics[ width=0.07\linewidth]{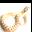} &
        \includegraphics[ width=0.07\linewidth]{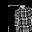} &
        \includegraphics[ width=0.07\linewidth]{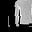} &
        \includegraphics[ width=0.07\linewidth]{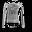} &
        \includegraphics[ width=0.07\linewidth]{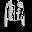} &
        \includegraphics[ width=0.07\linewidth]{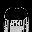} &
        \includegraphics[ width=0.07\linewidth]{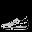} \\
        
        \includegraphics[ width=0.07\linewidth]{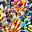} & 
        \includegraphics[ width=0.07\linewidth]{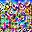} &
        \includegraphics[ width=0.07\linewidth]{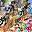} &
        \includegraphics[ width=0.07\linewidth]{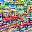}&
        \includegraphics[ width=0.07\linewidth]{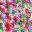} &
         \includegraphics[ width=0.07\linewidth]{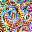} &
        \includegraphics[ width=0.07\linewidth]{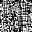} &
        \includegraphics[ width=0.07\linewidth]{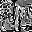} &
        \includegraphics[ width=0.07\linewidth]{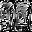} &
        \includegraphics[ width=0.07\linewidth]{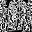} &
        \includegraphics[ width=0.07\linewidth]{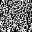} &
        \includegraphics[ width=0.07\linewidth]{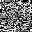}  \\
        
        6.94dB &
        5.62dB &
        7.64dB &
        6.97dB &
        6.74dB &
        7.02dB &
        7.72dB &
        6.56dB &
        8.24dB &
        6.63dB &
        7.02dB &
        6.04dB \\
        \\
        
        \multicolumn{3}{l}{(a) CIFAR100 with ResNet20} & \multicolumn{3}{l}{(b) CIFAR100 with ConvNet} & \multicolumn{3}{l}{(c) F-MNIST with ResNet20} & \multicolumn{3}{l}{(d) F-MNIST with ConvNet} \\
    \end{tabular}
    \end{center}
    \vspace{-1em}
    \caption{Visual results and the PSNR values of the reconstruction attacks \cite{geiping2020inverting} with and without our defense. Row 1: clean samples. Row 2: reconstructed samples without transformation. Row 3: transformed samples. Row 4: reconstructed samples with transformation. The adopted transformations are the corresponding \emph{Hybrid} policies in Table \ref{tab:overview}. }
    \label{fig:overview}
\end{figure*}

Table~\ref{tab:overview} reports the quantitative results of Adam+Cosine attacks and model accuracy. For each dataset and architecture, we consider the model training with no transformations,
random selected policies, the top-2 of the searched policies and their hybrid. We observe that randomly selected policies fail to invalidate reconstruction attacks. In contrast, the searched policies can effectively reduce the deep leakage from the gradients. The hybrid of policies exhibits higher generalization ability on the final model. 

\begin{table*}
    \centering
    \captionsetup[subtable]{position = below}
           \resizebox{0.21\textwidth}{!}{
           \begin{subtable}{0.22\linewidth}
               \centering
               \begin{tabular}{c|cc}
                   \hline
                   \textbf{Policy} & \textbf{PSNR} & \textbf{ACC} \\ \hline
                        None & 13.88 &	76.88 \\
                        Random  &	11.41 &	73.94 \\
                        3-1-7 &	6.579 &	70.56 \\
                        43-18-18 & 8.56 & 77.27  \\
                        Hybrid  & 7.64 & 77.92 \\ \hline
               \end{tabular}
               \caption{CIFAR100 with ResNet20}
               \label{tab:overview-cifar1}
           \end{subtable}}%
           \hspace*{2em}
           \resizebox{0.21\textwidth}{!}{
           \begin{subtable}{0.22\linewidth}
               \centering
               \begin{tabular}{c|cc}
                   \hline
                   \textbf{Policy} & \textbf{PSNR} & \textbf{ACC} \\ \hline
                        None & 13.07 &	70.13 \\
                        Random & 12.18 &	69.91 \\
                        21-13-3 & 5.76  &	66.98 \\
                        7-4-15 & 7.75  & 69.67 \\
                        Hybrid & 6.83  & 70.27 \\ \hline
               \end{tabular}
               \caption{CIFAR100 with ConvNet}
               \label{tab:overview-cifar2}
           \end{subtable}}%
           \hspace*{2em}
           \resizebox{0.21\textwidth}{!}{
           \begin{subtable}{0.22\linewidth}
               \centering
               \begin{tabular}{c|cc}
                   \hline
                   \textbf{Policy} & \textbf{PSNR} & \textbf{ACC} \\ \hline
                        None & 10.04 &	95.03 \\
                        Random &	9.23 &	91.16 \\
                        19-15-45&	7.01  & 91.33 \\
                        2-43-21 &   7.75  & 89.41 \\
                        Hybrid &   7.60  & 92.23 \\ \hline
               \end{tabular}
               \caption{F-MNIST with ResNet20}
               \label{tab:overview-mnist1}
           \end{subtable}}%
           \hspace*{2em}
           \resizebox{0.21\textwidth}{!}{
           \begin{subtable}{0.22\linewidth}
               \centering
               \begin{tabular}{c|cc}
                   \hline
                   \textbf{Policy} & \textbf{PSNR} & \textbf{ACC} \\ \hline
                        None &	9.12 &	94.25 \\
                        Random &	8.83 &	90.18 \\
                        42-28-42 &	7.01  & 91.33 \\
                        14-48-48 &   6.75  & 90.56 \\
                        Hybrid &   6.94  & 91.35 \\ \hline
               \end{tabular}
               \caption{F-MNIST with ConvNet}
               \label{tab:overview-mnist2}
           \end{subtable}}%
            \captionsetup[table]{position=bottom}
           \caption{PSNR (db) and model accuracy (\%) of different transformation configurations for each architecture and dataset.}
           \label{tab:overview}

\end{table*}

Table \ref{tab:attack} reports the PSNR values of the hybrid strategy against different reconstruction attacks and their variants. Compared with the training process without any defenses, the hybrid of searched transformations can significantly reduce the image quality of the reconstructed images, and eliminate information leakage in different attacks. 
  

\begin{table}[t]
    \begin{center}
    \small
    \resizebox{0.47\textwidth}{!}{
    \begin{tabular}{c c c|| c c c }
        \toprule
          \textbf{Attack} &  \textbf{None} & \textbf{Hybrid} & \textbf{Attack} &  \textbf{None} & \textbf{Hybrid} \\
        \midrule 
        LBFGS+L2  & 6.93 & 4.79 & LBFGS+COS  & 10.33 & 6.16 \\
        \midrule
        Adam+Cosine & 13.88 & 7.64 & Adam+L2 & 10.55 & 7.61 \\
        \midrule
        Adam+L1  & 9.99 & 6.97 & SGD+COS & 14.04 & 7.71 \\
        \bottomrule
    \end{tabular}}
    \end{center}
    \caption{The PSNR values (db) between the reconstructed and transformed images under different attack techniques.}
    \label{tab:attack}
\end{table}

\noindent\textbf{Comparisons with other defenses.}
We also compare our solution with state-of-the-art privacy-preserving methods proposed in prior works. We consider model pruning with different compression ratios, and differential privacy with different noise scales and types. Table~\ref{tab:defense} illustrates the comparison results. We observe that these solutions can hardly reduce the PSNR values, and the model accuracy is decreased significantly with larger perturbation. These results are consistent with the conclusion in \cite{zhu2019deep}. In contrast, our solution can significantly destruct the quality of recovered images, while maintaining high model accuracy.

\begin{table}[t]
    \begin{center}
    \resizebox{0.27\textwidth}{!}{
    \begin{tabular}{c c c c  }
        \toprule
        \textbf{Defense} & \textbf{PSNR} & \textbf{ACC}    \\
        \midrule
        Pruning ($70\%$) & 12.00 & 77.12 \\
        Pruning ($95\%$) & 10.07 & 70.12 \\
        Pruning ($99\%$) & 10.93 & 58.33 \\
        \midrule
        Laplacian ($10^{-3}$) & 11.85 & 74.12 \\
        Laplacian ($10^{-2}$) & 9.67 & 39.59 \\
        Gaussian ($10^{-3}$) & 12.71 & 75.67 \\
        Gaussian ($10^{-2}$) & 11.44 & 48.2 \\
        \midrule
        Hybrid &	7.64 & 77.92 \\
        \bottomrule
    \end{tabular}}
    \end{center}
    \vspace{-1em}
    \caption{Comparisons with existing defense methods under the Adam+Cosine attack.}
    \label{tab:defense}
    \vspace{-1em}
\end{table}

\noindent\textbf{Transferability.}
In the above experiments, we search the optimal policies for each dataset. Actually the searched transformations have high transferability across different datasets. To verify this, we apply the policies searched from CIFAR100 to the tasks of F-MNIST, and Table \ref{tab:db-transfer} illustrates the PSNR and accuracy values. We observe that although these transferred policies are slightly worse than the ones directly searched from F-MNIST, they are still very effective in preserving the privacy and model performance, and much better than the randomly selected policies. This transferability property makes our solution more efficient. 

\begin{table}
    \centering
    \captionsetup[subtable]{position = below}
    \resizebox{0.22\textwidth}{!}{
           \begin{subtable}{0.5\linewidth}
               \centering
               \begin{tabular}{c|cc}
                   \hline
                   \textbf{Policy} & \textbf{PSNR} & \textbf{ACC} \\ \hline
                        None & 10.04 &	95.03 \\
                        3-1-7 &	7.5 &	87.95 \\
                        43-18-18 & 8.13 & 91.29  \\
                        Hybrid  & 8.14 & 91.49 \\ \hline
               \end{tabular}
               \caption{F-MNIST with ResNet20}
               \label{tab:db-transfer1}
           \end{subtable}}
          \hspace*{1em}
          \resizebox{0.22\textwidth}{!}{
           \begin{subtable}{0.5\linewidth}
               \centering
               \begin{tabular}{c|cc}
                   \hline
                   \textbf{Policy} & \textbf{PSNR} & \textbf{ACC} \\ \hline
                        None &	9.12 &	94.25 \\
                        21-13-3 & 7.51  &	74.81 \\
                        7-4-15 & 7.68  & 88.29 \\
                        Hybrid & 7.11  & 87.51 \\ \hline
               \end{tabular}
               \caption{F-MNIST with ConvNet}
               \label{tab:db-transfer2}
           \end{subtable}}
            \captionsetup[table]{position=bottom}
           \caption{Transferability results: applying the same policies from CIFAR100 to F-MNIST.}
           \label{tab:db-transfer}

\end{table}



\subsection{Explanations about the Transformation Effects}
\label{sec:delve-into-augmentation}

In this section, we further analyze the mechanisms of the transformations that can invalidate reconstruction attacks. We first investigate which kinds of transformations are particularly effective in obfuscating input samples. Figure~\ref{fig:transform-dist} shows the privacy score of each transformation. The five transformations with the lowest scores are (red bars in the figure): 3rd [horizontal shifting, 9], 15th [brightness, 9], 18th [contrast, 7], 26th [brightness, 6] and 1st [contrast, 6]; where the parameters inside the brackets are the magnitudes of the transformations. These functions are commonly selected in the optimal policies. 

Horizontal shifting achieves the lowest score, as it incurs a portion of black area, which can undermine the quality of the recovered image during the optimization. Contrast and brightness aim to modify the lightness of an image. These operations can blur the local details, which also increase the difficulty of image reconstruction. Overall, the selected privacy-preserving transformations can distort the details of the images, while maintaining the semantic information.

\begin{figure}
    \centering
    \vspace{-1em}
    \includegraphics[ width=0.85\linewidth]{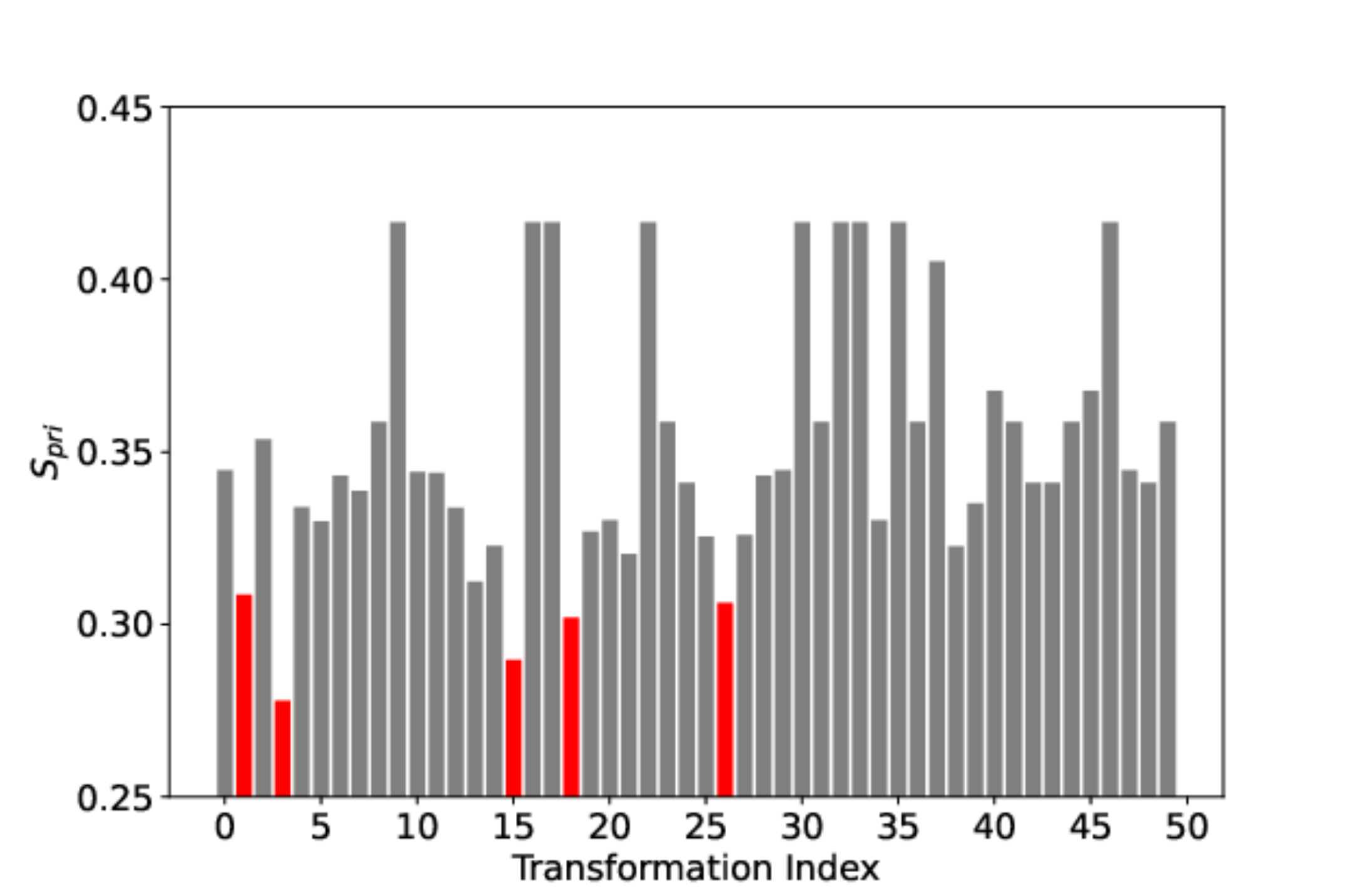}
    \caption{Privacy scores of the 50 transformation functions in the augmentation library.}
    \vspace{-1em}
    \label{fig:transform-dist}
\end{figure}

Next, we explore the attack effects at different network layers. We compare three strategies: (1) no transformation; (2) random transformation policy; (3) searched transformation policy. Figure \ref{fig:deep-shallow} demonstrates the similarity between the gradient of the reconstructed samples and the actual gradient for two shallow layers (a) and two deep layers (b). We can observe that at shallow layers, the similarity scores converge to 0.7 when no or random policy is applied. In contrast, the similarity score stays at lower values when the optimal policy is used. This indicates that the optimal policy makes it difficult to reconstruct low-level visual features of the input,  e.g. color, shape, and texture. The similarity scores for all the three cases are almost the same at deep layers. This reveals the optimal policy has negligible impact on the semantic information of the images used for classification, and the model performance is thus maintained.

\begin{figure}
    \centering
    \begin{subfigure}[b]{\linewidth}
        \centering
        \includegraphics[ width=0.49\linewidth]{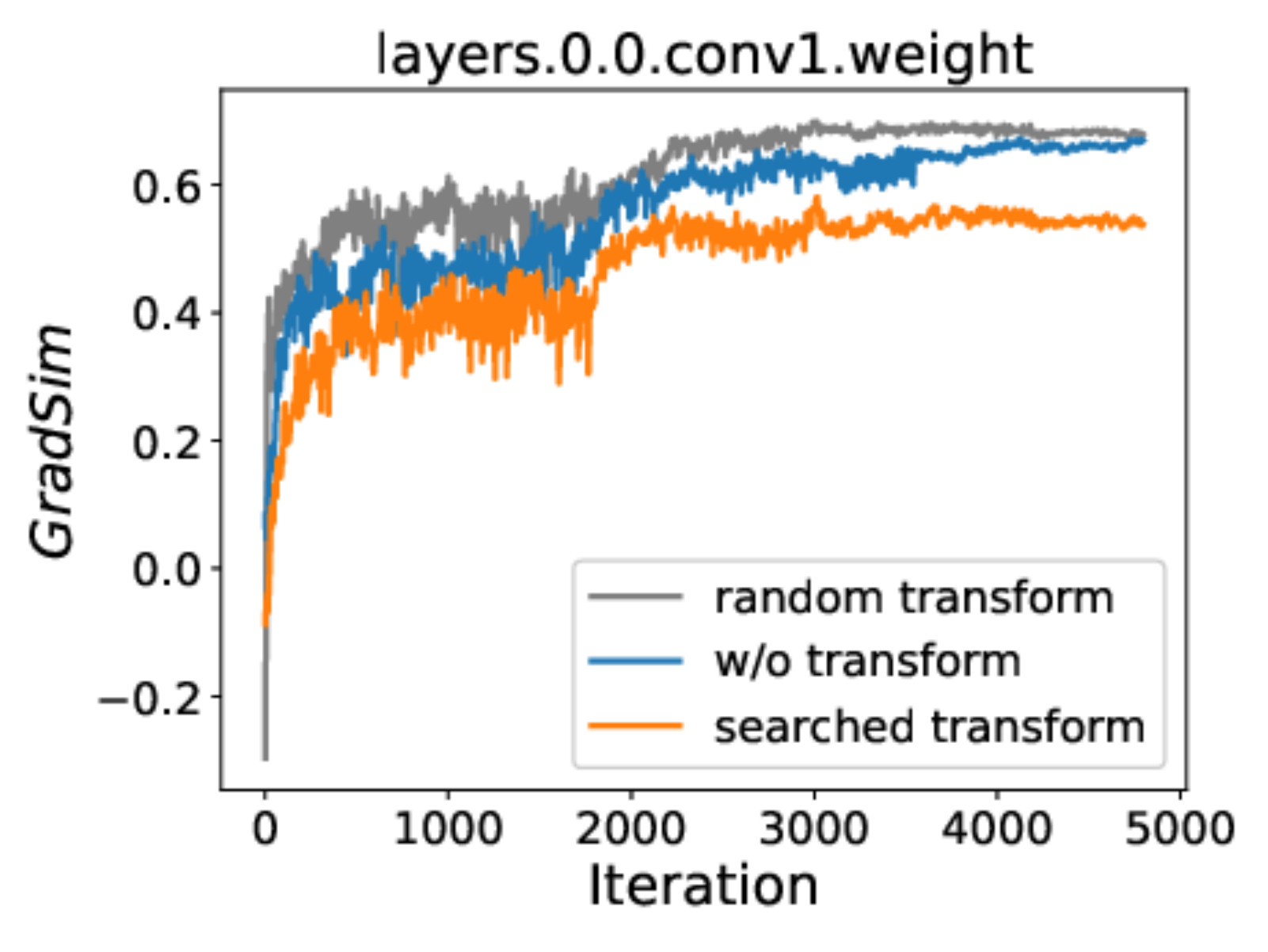}
        \includegraphics[ width=0.49\linewidth]{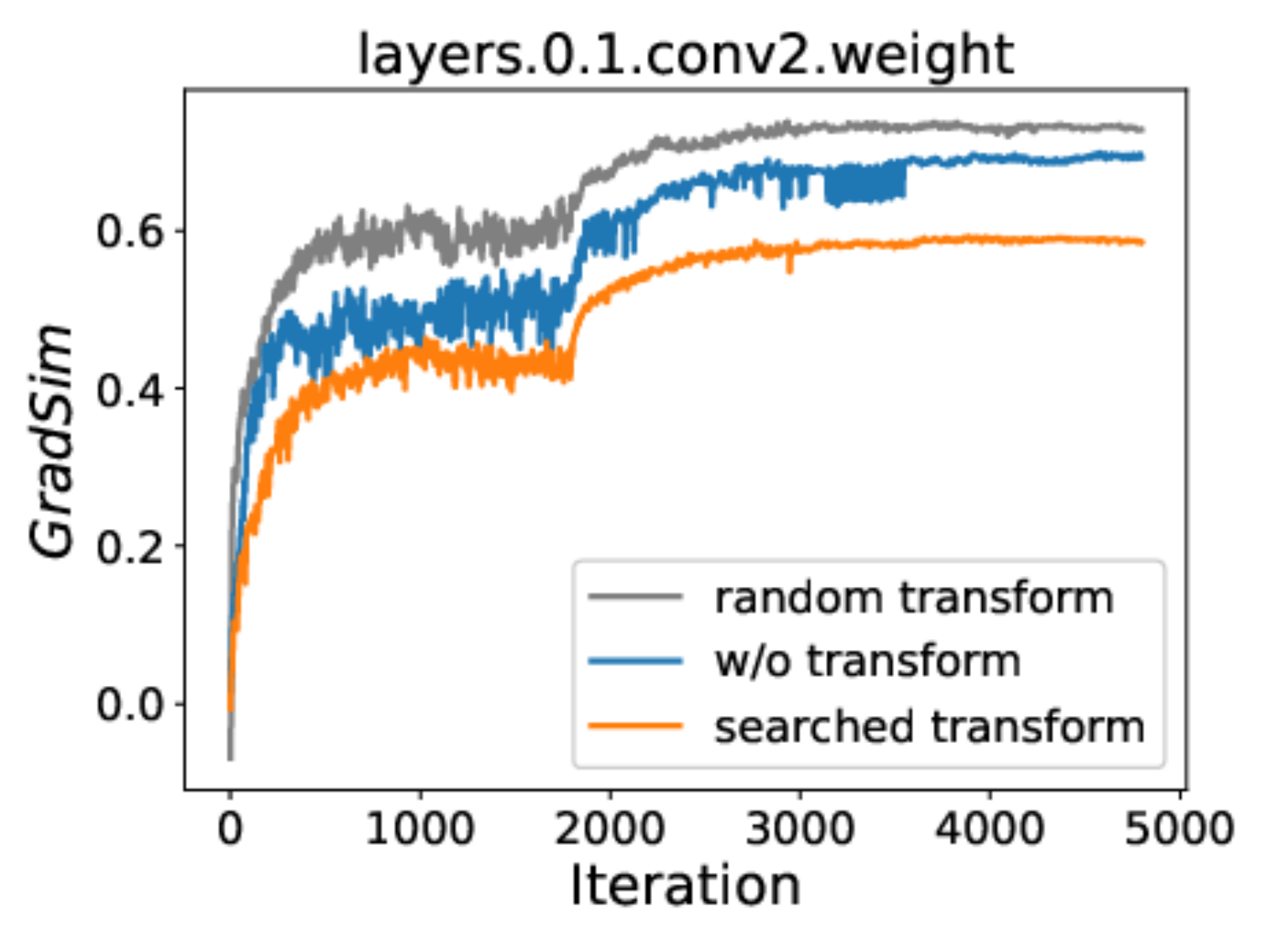}
        \caption{Shallow layers}
        \label{fig:shallow}
     \end{subfigure}    

    \begin{subfigure}[b]{\linewidth}
        \centering
        \includegraphics[ width=0.49\linewidth]{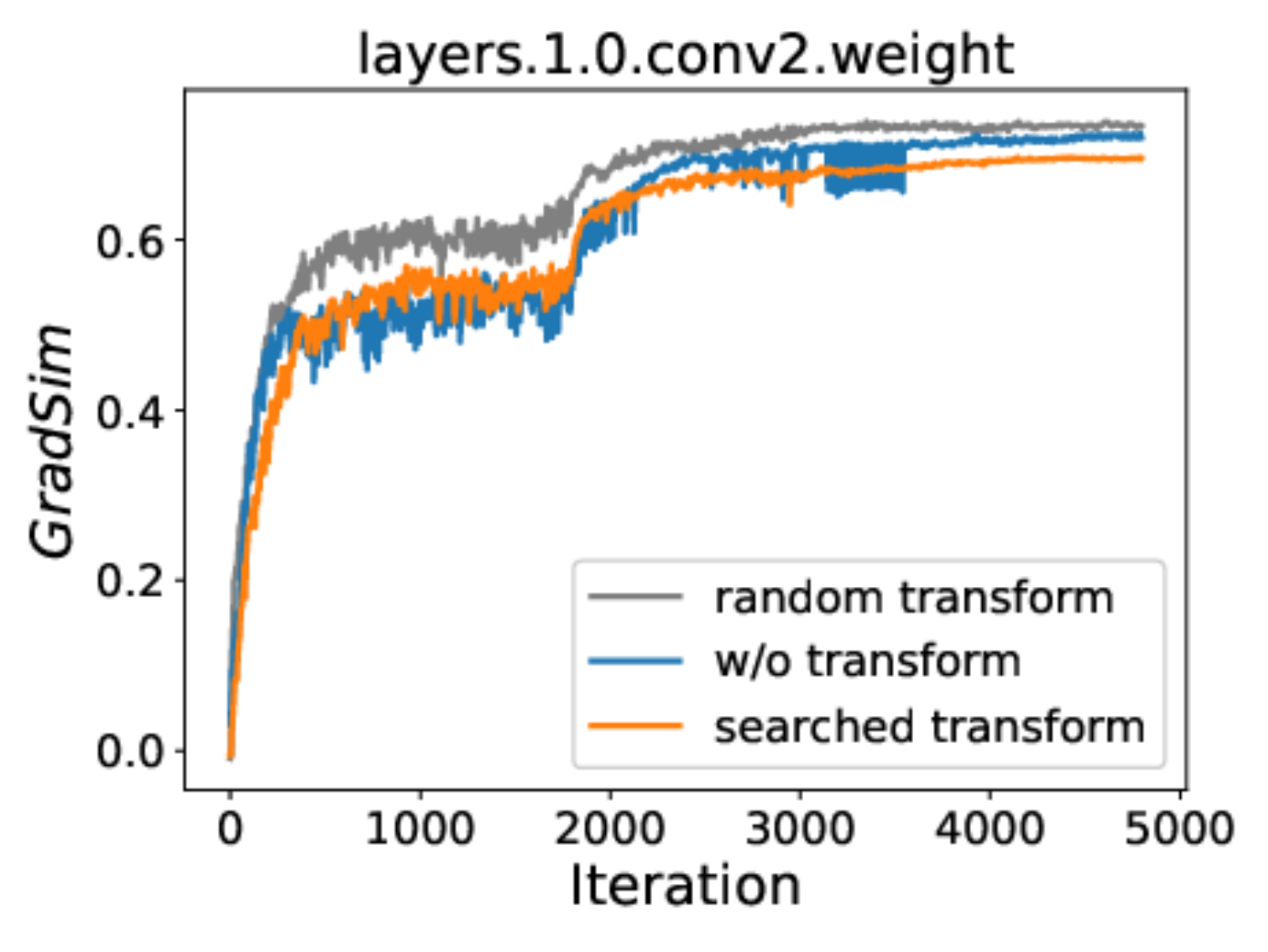}
        \includegraphics[ width=0.49\linewidth]{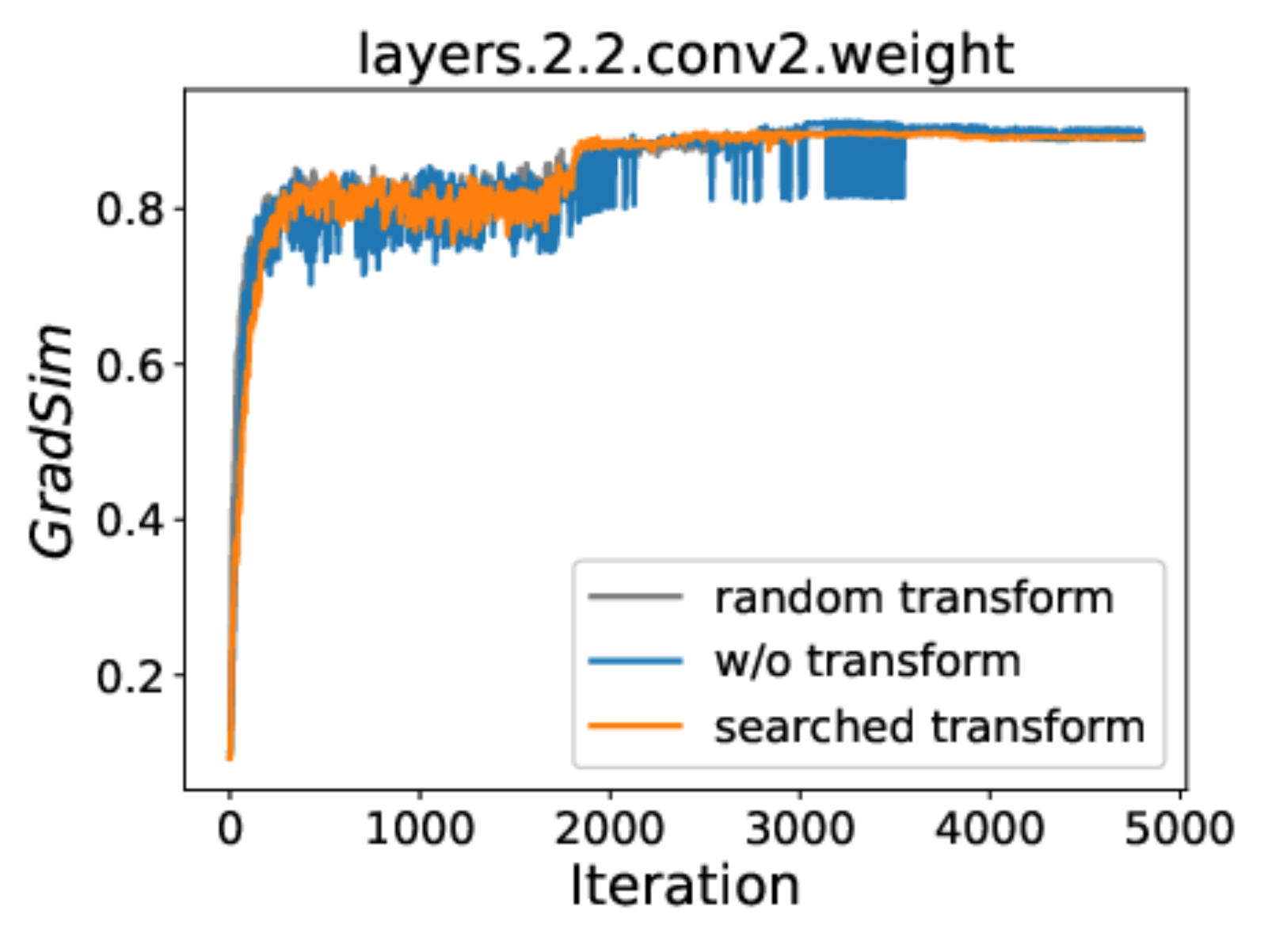}
        \caption{Deep layers}
        \label{fig:Deep}
     \end{subfigure}    
    \caption{Gradient similarity during the reconstruction optimization process, for CIFAR100 with ResNet20. }
    \vspace{-1em}
    \label{fig:deep-shallow}
\end{figure}

\section{Discussions and Future Work}


%



\bheading{Adaptive attack.} Our solution prevents image reconstruction via data augmentation techniques. Although the evaluations show it is effective against existing attacks, a more sophisticated adversary may try to bypass our defense from two aspects. First, instead of starting from a randomly initialized image, he may guess the content property or class representatives of the target sample, and start the reconstruction from an image with certain semantic information. The success of such attacks depends on the probability of a successful guess, which becomes lower with higher complexity or variety of images. Second, the adversary may design attack techniques instead of optimizing the distance between the real and dummy gradients. We leave these advanced attacks as future work. 

\bheading{Defending other domains.} In this paper, we focus on the computer vision domain and image classification tasks. The reconstruction attacks may occur in other domains, e.g., natural language processing \cite{zhu2019deep}. Then the searched image transformations cannot be applied. However, it is possible to use text augmentation techniques \cite{kobayashi2018contextual,wei2019eda} (e.g., deletion, insertion, shuffling, synonym replacement) to preprocess the sensitive text to be less leaky without losing the semantics. Future work will focus on the design of an automatic search method for privacy protection of NLP tasks.

\section{Conclusion}
In this paper, we devise a novel methodology to automatically and efficiently search for data augmentation policies, which can prevent information leakage from the shared gradients. Our extensive evaluations demonstrate that the identified policies can defeat existing reconstruction attacks with negligible overhead. These policies also enjoy high transferability across different datasets, and applicability to different learning systems. We expect our search method can be adopted by researchers and practitioners to identify more effective policies when new data augmentation techniques are designed in the future.

\section{Acknowledgement}
We thank the anonymous reviewers for their valuable comments. This research was conducted in collaboration with SenseTime. This work is supported by A*STAR through the Industry Alignment Fund --- Industry Collaboration Projects Grant. It is also supported by the National Research Foundation, Singapore under its AI Singapore Programme (AISG Award No: AISG2-RP-2020-019).


\newpage

{\small
\bibliographystyle{ieee_fullname}
\bibliography{egbib}

\begin{thebibliography}{10}\itemsep=-1pt

\bibitem{abadi2016deep}
Martin Abadi, Andy Chu, Ian Goodfellow, H~Brendan McMahan, Ilya Mironov, Kunal
  Talwar, and Li Zhang.
\newblock Deep learning with differential privacy.
\newblock In {\em ACM SIGSAC conference on Computer and Communications
  Security}, 2016.

\bibitem{dutta2019discrepancy}
Dutta Aritra, Houcine~Bergou El, M.~Abdelmoniem Ahmed, Ho Chen-Yu, Narayan~Sahu
  Atal, Canini Marco, and Kalnis Panos.
\newblock On the discrepancy between the theoretical analysis and practical
  implementations of compressed communication for distributed deep learning.
\newblock In {\em AAAI}, 2019.

\bibitem{brisimi2018federated}
Theodora~S Brisimi, Ruidi Chen, Theofanie Mela, Alex Olshevsky, Ioannis~Ch
  Paschalidis, and Wei Shi.
\newblock Federated learning of predictive models from federated electronic
  health records.
\newblock {\em International Journal of Medical Informatics}, 2018.

\bibitem{iccv2019autoaugment}
Ekin~D Cubuk, Barret Zoph, Dandelion Mane, Vijay Vasudevan, and Quoc~V Le.
\newblock Autoaugment: Learning augmentation strategies from data.
\newblock In {\em Int. Conf. Comput. Vis.}, 2019.

\bibitem{deng2009imagenet}
Jia Deng, Wei Dong, Richard Socher, Li-Jia Li, Kai Li, and Li Fei-Fei.
\newblock Imagenet: A large-scale hierarchical image database.
\newblock In {\em IEEE Conf. Comput. Vis. Pattern Recog.}, 2009.

\bibitem{fan2020rethinking}
Lixin Fan, Kam~Woh Ng, Ce Ju, Tianyu Zhang, Chang Liu, Chee~Seng Chan, and
  Qiang Yang.
\newblock Rethinking privacy preserving deep learning: How to evaluate and
  thwart privacy attacks.
\newblock {\em arXiv preprint arXiv:2006.11601}, 2020.

\bibitem{geiping2020inverting}
Jonas Geiping, Hartmut Bauermeister, Hannah Dr{\"o}ge, and Michael Moeller.
\newblock Inverting gradients--how easy is it to break privacy in federated
  learning?
\newblock In {\em Advances in Neural Information Processing Systems}, 2020.

\bibitem{guo2020hidden}
Shangwei Guo, Tianwei Zhang, Han Qiu, Yi Zeng, Tao Xiang, and Yang Liu.
\newblock The hidden vulnerability of watermarking for deep neural networks.
\newblock {\em arXiv preprint arXiv:2009.08697}, 2020.

\bibitem{guo2020differentially}
Shangwei Guo, Tianwei Zhang, Tao Xiang, and Yang Liu.
\newblock Differentially private decentralized learning.
\newblock {\em arXiv preprint arXiv:2006.07817}, 2020.

\bibitem{guo2020towards}
Shangwei Guo, Tianwei Zhang, Xiaofei Xie, Lei Ma, Tao Xiang, and Yang Liu.
\newblock Towards byzantine-resilient learning in decentralized systems.
\newblock {\em arXiv preprint arXiv:2002.08569}, 2020.

\bibitem{hao2019efficient}
Meng Hao, Hongwei Li, Xizhao Luo, Guowen Xu, Haomiao Yang, and Sen Liu.
\newblock Efficient and privacy-enhanced federated learning for industrial
  artificial intelligence.
\newblock {\em IEEE Transactions on Industrial Informatics}, 16(10), 2019.

\bibitem{he2016deep}
Kaiming He, Xiangyu Zhang, Shaoqing Ren, and Jian Sun.
\newblock Deep residual learning for image recognition.
\newblock In {\em IEEE Conf. Comput. Vis. Pattern Recog.}, 2016.

\bibitem{he2019model}
Zecheng He, Tianwei Zhang, and Ruby~B Lee.
\newblock Model inversion attacks against collaborative inference.
\newblock In {\em ACM Annual Computer Security Applications Conference}, 2019.

\bibitem{he2020attacking}
Zecheng He, Tianwei Zhang, and Ruby~B Lee.
\newblock Attacking and protecting data privacy in edge-cloud collaborative
  inference systems.
\newblock {\em IEEE Internet of Things Journal}, 2020.

\bibitem{hitaj2017deep}
Briland Hitaj, Giuseppe Ateniese, and Fernando Perez-Cruz.
\newblock Deep models under the {GAN}: information leakage from collaborative
  deep learning.
\newblock In {\em ACM SIGSAC Conference on Computer and Communications
  Security}, 2017.

\bibitem{hore2010image}
Alain Hore and Djemel Ziou.
\newblock Image quality metrics: {PSNR} vs. {SSIM}.
\newblock In {\em International Conference on Pattern Recognition}, 2010.

\bibitem{mellor2020neural}
Mellor Joseph, Turner Jack, Storkey Amos, and J.~Crowley Elliot.
\newblock Neural architecture search without training.
\newblock {\em arXiv preprint arXiv:2006.04647}, 2020.

\bibitem{kobayashi2018contextual}
Sosuke Kobayashi.
\newblock Contextual augmentation: Data augmentation by words with paradigmatic
  relations.
\newblock {\em arXiv preprint arXiv:1805.06201}, 2018.

\bibitem{krizhevsky2009learning}
Alex Krizhevsky, Geoffrey Hinton, et~al.
\newblock Learning multiple layers of features from tiny images.
\newblock 2009.

\bibitem{kwok2005evolutionary}
Ngai~Ming Kwok, Gu Fang, and Weizhen Zhou.
\newblock Evolutionary particle filter: Re-sampling from the genetic algorithm
  perspective.
\newblock In {\em International Conference on Intelligent Robots and Systems},
  2005.

\bibitem{lecuyer2019certified}
Mathias Lecuyer, Vaggelis Atlidakis, Roxana Geambasu, Daniel Hsu, and Suman
  Jana.
\newblock Certified robustness to adversarial examples with differential
  privacy.
\newblock In {\em EEE Symposium on Security and Privacy}, 2019.

\bibitem{liu1989limited}
Dong~C Liu and Jorge Nocedal.
\newblock On the limited memory {BFGS} method for large scale optimization.
\newblock {\em Mathematical programming}, 45(1-3), 1989.

\bibitem{melis2019exploiting}
Luca Melis, Congzheng Song, Emiliano De~Cristofaro, and Vitaly Shmatikov.
\newblock Exploiting unintended feature leakage in collaborative learning.
\newblock In {\em IEEE Symposium on Security and Privacy}, 2019.

\bibitem{nasr2019comprehensive}
Milad Nasr, Reza Shokri, and Amir Houmansadr.
\newblock Comprehensive privacy analysis of deep learning: Passive and active
  white-box inference attacks against centralized and federated learning.
\newblock In {\em IEEE Symposium on Security and Privacy}, 2019.

\bibitem{neal2001annealed}
Radford~M Neal.
\newblock Annealed importance sampling.
\newblock {\em Statistics and Computing}, 2001.

\bibitem{niknam2020federated}
Solmaz Niknam, Harpreet~S Dhillon, and Jeffrey~H Reed.
\newblock Federated learning for wireless communications: Motivation,
  opportunities, and challenges.
\newblock {\em IEEE Communications Magazine}, 58(6), 2020.

\bibitem{paszke2019pytorch}
Adam Paszke, Sam Gross, Francisco Massa, Adam Lerer, James Bradbury, Gregory
  Chanan, Trevor Killeen, Zeming Lin, Natalia Gimelshein, Luca Antiga, et~al.
\newblock Pytorch: An imperative style, high-performance deep learning library.
\newblock In {\em Advances in Neural Information Processing Systems}, 2019.

\bibitem{phan2020scalable}
Hai Phan, My~T Thai, Han Hu, Ruoming Jin, Tong Sun, and Dejing Dou.
\newblock Scalable differential privacy with certified robustness in
  adversarial learning.
\newblock In {\em International Conference on Machine Learning}, 2020.

\bibitem{pool}
Philip Popien.
\newblock {AutoAugment - Learning Augmentation Policies from Data}.
\newblock \url{https://github.com/DeepVoltaire/AutoAugment}.

\bibitem{qiu2020fencebox}
Han Qiu, Yi Zeng, Tianwei Zhang, Yong Jiang, and Meikang Qiu.
\newblock Fencebox: A platform for defeating adversarial examples with data
  augmentation techniques.
\newblock {\em arXiv preprint arXiv:2012.01701}, 2020.

\bibitem{qiu2020mitigating}
Han Qiu, Yi Zeng, Qinkai Zheng, Tianwei Zhang, Meikang Qiu, and Gerard Memmi.
\newblock Mitigating advanced adversarial attacks with more advanced gradient
  obfuscation techniques.
\newblock {\em arXiv preprint arXiv:2005.13712}, 2020.

\bibitem{qiu2020towards}
Han Qiu, Qinkai Zheng, Tianwei Zhang, Meikang Qiu, Gerard Memmi, and Jialiang
  Lu.
\newblock Towards secure and efficient deep learning inference in dependable
  iot systems.
\newblock {\em IEEE Internet of Things Journal}, 2020.

\bibitem{wei2019eda}
Jason Wei and Kai Zou.
\newblock Eda: Easy data augmentation techniques for boosting performance on
  text classification tasks.
\newblock {\em arXiv preprint arXiv:1901.11196}, 2019.

\bibitem{wei2020framework}
Wenqi Wei, Ling Liu, Margaret Loper, Ka-Ho Chow, Mehmet~Emre Gursoy, Stacey
  Truex, and Yanzhao Wu.
\newblock A framework for evaluating gradient leakage attacks in federated
  learning.
\newblock {\em arXiv preprint arXiv:2004.10397}, 2020.

\bibitem{wistuba2019survey}
Martin Wistuba, Ambrish Rawat, and Tejaswini Pedapati.
\newblock A survey on neural architecture search.
\newblock {\em arXiv preprint arXiv:1905.01392}, 2019.

\bibitem{xiao2017fashion}
Han Xiao, Kashif Rasul, and Roland Vollgraf.
\newblock Fashion-mnist: a novel image dataset for benchmarking machine
  learning algorithms.
\newblock {\em arXiv preprint arXiv:1708.07747}, 2017.

\bibitem{yang2019federated}
Qiang Yang, Yang Liu, Tianjian Chen, and Yongxin Tong.
\newblock Federated machine learning: Concept and applications.
\newblock {\em ACM Transactions on Intelligent Systems and Technology}, 10(2),
  2019.

\bibitem{zeng2020deepsweep}
Yi Zeng, Han Qiu, Shangwei Guo, Tianwei Zhang, Meikang Qiu, and Bhavani
  Thuraisingham.
\newblock Deepsweep: An evaluation framework for mitigating dnn backdoor
  attacks using data augmentation.
\newblock In {\em arXiv preprint arXiv:2012.07006}, 2021.

\bibitem{zhao2020idlg}
Bo Zhao, Konda~Reddy Mopuri, and Hakan Bilen.
\newblock {iDLG}: Improved deep leakage from gradients.
\newblock {\em arXiv preprint arXiv:2001.02610}, 2020.

\bibitem{zhao2020privatedl}
Qi Zhao, Chuan Zhao, Shujie Cui, Shan Jing, and Zhenxiang Chen.
\newblock {PrivateDL}: Privacy-preserving collaborative deep learning against
  leakage from gradient sharing.
\newblock {\em International Journal of Intelligent Systems}, 2020.

\bibitem{zhu2019deep}
Ligeng Zhu, Zhijian Liu, and Song Han.
\newblock Deep leakage from gradients.
\newblock In {\em Advances in Neural Information Processing Systems}, 2019.

\bibitem{zoph2016neural}
Barret Zoph and Quoc~V Le.
\newblock Neural architecture search with reinforcement learning.
\newblock {\em arXiv preprint arXiv:1611.01578}, 2016.

\end{thebibliography}
}

\clearpage
\appendix
\section{Implementation Details}
\label{sec:implementation-detail}

\bheading{Searching Transformation Policies.}
Instead of each participant searching for their own transformation policies, we first adopt Algorithm 1 to obtain a universal policy set $\mathcal{T}$ on a validation set. During the training process, all participants would use $\mathcal{T}$ for privacy protection. In particular, for each policy $T$, we calculate the corresponding privacy score $S_{pri}(T)$ on 100 randomly selected images in the validation set except the first 100 images that are used for attack evaluation. 
We optimize the randomly initialized model $f$ for 10 forward-background rounds and use the average value of the accuracy scores of the ten rounds as $S_{acc}$. The batch size of each round is set as 128. 
We adaptively adjust the accuracy threshold $T_{acc}$ for different architectures and datasets. In particular, we set $T_{acc}$ as $-85$ (ResNet20, CIFAR100), $-80$ (ConvNet, CIFAR100), $-12$ (ResNet20, F-MNIST), and $-10$ (ConvNet, F-MNIST), respectively.

\bheading{Training Implementation}
We utilize SGD with momentum 0.9 and weight decay $5\cdot 10^{-4}$ to optimize the deep neural networks. We set the training epoch as 200 (resp. 100) for CIFAR100 (resp. F-MNIST). 
The initial learning rate is 0.1 and steply decays by a factor of 0.1 at $\frac{3}{8}$, $\frac{5}{8}$, and $\frac{7}{8}$ of all iterations.

\bheading{Attacking Implementation}
For attacks without using L-BFGS, we follow the same settings in~\cite{geiping2020inverting}. We set the iteration number of the reconstruction optimizations as 4800 and adopt the same training policy as the above collaborative learning.
The total variation weight is set as $10^{-4}$. 

Because The results of L-BFGS-based reconstruction attacks are unstable, we run the attacks 16 times and select the best reconstruction results. For fair comparison, we reduce the iteration number to 300. 

\bheading{Figure Implementation}
We present the implementation details of the figures in our manuscript as follows:
\begin{packeditemize}
\item We select $3-1-7$ as the privacy-aware transformation policy to generate Figure~4-2. We adopt a semi-train ResNet20 on CIFAR100 to calculate the $GradSim$ values of the interpolation images.
\item In Figure~4-3, we randomly sample 100 different transformation policies from the 127, 550 policies an calculate the average $PSNR$ of reconstructed images under IG~\cite{geiping2020inverting}. We adopt the semi-train ResNet20 as the initial model to calculate the gradients on the first 100 images in the validation set.
\item The reconstruction attack in Figure 5-7 is IG~\cite{geiping2020inverting}. The random transformation policy is $19-1-18$ and the searched policy is the hybrid of $3-1-7$ and $43-18-18$. 
\end{packeditemize}




\section{Transformation Space}
\label{sec:augmentation-detail}
We summarize the 50 transformations used in our manuscript in Table~\ref{tab:transformation-detail}.

\begin{table*}
    \centering
    \captionsetup[subtable]{position = below}
           \resizebox{0.16\textwidth}{!}{
           \begin{subtable}{0.25\linewidth}
               \centering
               \begin{tabular}{ccc}
                   \toprule
                   \textbf{Index} & \textbf{Transformation} & \textbf{Magnitude} \\ \midrule
                        0 & invert & 7 \\
                        1 & contrast & 6 \\
                        2 & rotate & 2 \\
                        3 & translateX & 9 \\
                        4 & sharpness & 1 \\
                        5 & sharpness & 3 \\
                        6 & shearY & 2 \\
                        7 & translateY & 2 \\
                        8 & autocontrast & 5 \\
                        9 & equalize & 2 \\
                        10 & shearY & 5 \\
                        11 & posterize & 5 \\
                        12 & color & 3 \\ \bottomrule
               \end{tabular}
           \end{subtable}}%
           \hspace*{3em}
           \resizebox{0.16\textwidth}{!}{
           \begin{subtable}{0.25\linewidth}
               \centering
               \begin{tabular}{|ccc}
                   \toprule
                    \textbf{Index} & \textbf{Transformation} & \textbf{Magnitude} \\ \midrule
                        13 & brightness & 5 \\
                        14 & sharpness & 9 \\
                        15 & brightness & 9 \\
                        16 & equalize & 5 \\
                        17 & equalize & 1 \\
                        18 & contrast & 7 \\
                        19 & sharpness & 5 \\
                        20 & color & 5 \\
                        21 & translateX & 5 \\
                        22 & equalize & 7 \\
                        23 & autocontrast & 8 \\
                        24 & translateY & 3 \\
                        25 & sharpness & 6 \\ \bottomrule
               \end{tabular}
           \end{subtable}}%
           \hspace*{3em}
           \resizebox{0.16\textwidth}{!}{
           \begin{subtable}{0.25\linewidth}
               \centering
               \begin{tabular}{|ccc}
                   \toprule
                    \textbf{Index} & \textbf{Transformation} & \textbf{Magnitude} \\ \midrule
                        26 & brightness & 6 \\
                        27 & color & 8 \\
                        28 & solarize & 0 \\
                        29 & invert & 0 \\
                        30 & equalize & 0 \\
                        31 & autocontrast & 0 \\
                        32 & equalize & 8 \\
                        33 & equalize & 4 \\
                        34 & color & 5 \\
                        35 & equalize & 5 \\
                        36 & autocontrast & 4 \\
                        37 & solarize & 4 \\
                        38 & brightness & 3 \\ \bottomrule
               \end{tabular}
           \end{subtable}}%
            \hspace*{3em}
           \resizebox{0.16\textwidth}{!}{
           \begin{subtable}{0.25\linewidth}
               \centering
               \begin{tabular}{|ccc}
                   \toprule
                    \textbf{Index} & \textbf{Transformation} & \textbf{Magnitude} \\ \midrule
                        39 & color & 0 \\
                        40 & solarize & 1 \\
                        41 & autocontrast & 0 \\
                        42 & translateY & 3 \\
                        43 & translateY & 4 \\
                        44 & autocontrast & 1 \\
                        45 & solarize & 1 \\
                        46 & equalize & 5 \\
                        47 & invert & 1 \\
                        48 & translateY & 3 \\
                        49 & autocontrast & 1 \\ 
                         & & \\
                         & & \\ \bottomrule
               \end{tabular}
           \end{subtable}}%
           
            \captionsetup[table]{position=bottom}
           \caption{Summary of the 50 transformations.}
           \label{tab:transformation-detail}

\end{table*}

\section{More Transformation Results}
We provide more experimental results of other high-ranking transformations in Tab~\ref{tab:other-results}. The reconstruction attack is IG~\cite{geiping2020inverting}.

\begin{table*}
    \centering
    \captionsetup[subtable]{position = below}
           \resizebox{0.21\textwidth}{!}{
           \begin{subtable}{0.22\linewidth}
               \centering
               \begin{tabular}{c|cc}
                   \hline
                   \textbf{Policy} & \textbf{PSNR} & \textbf{ACC} \\ \hline
                        3-18-28 &	7.3 &	72.09 \\
                        7-3  & 7.64 & 71.63 \\ 
                        \hline
               \end{tabular}
               \caption{CIFAR100 with ResNet20}
           \end{subtable}}%
           \hspace*{2em}
           \resizebox{0.21\textwidth}{!}{
           \begin{subtable}{0.22\linewidth}
               \centering
               \begin{tabular}{c|cc}
                   \hline
                   \textbf{Policy} & \textbf{PSNR} & \textbf{ACC} \\ \hline
                        15-43 & 8.27  & 68.66 \\
                        37-33-3 & 7.83  & 67.89 \\ \hline
               \end{tabular}
               \caption{CIFAR100 with ConvNet}
           \end{subtable}}%
           \hspace*{2em}
           \resizebox{0.21\textwidth}{!}{
           \begin{subtable}{0.22\linewidth}
               \centering
               \begin{tabular}{c|cc}
                   \hline
                   \textbf{Policy} & \textbf{PSNR} & \textbf{ACC} \\ \hline
                        15-40-5 & 7.82 &	88.03 \\
                        0-39-35 &   6.97  & 87.10 \\ \hline
               \end{tabular}
               \caption{F-MNIST with ResNet20}
           \end{subtable}}%
           \hspace*{2em}
           \resizebox{0.21\textwidth}{!}{
           \begin{subtable}{0.22\linewidth}
               \centering
               \begin{tabular}{c|cc}
                   \hline
                   \textbf{Policy} & \textbf{PSNR} & \textbf{ACC} \\ \hline
                        43-43-48 &	7.00  & 88.08 \\
                        42-26-45 &   7.51  & 87.91 \\ \hline
               \end{tabular}
               \caption{F-MNIST with ConvNet}
           \end{subtable}}%
            \captionsetup[table]{position=bottom}
           \caption{PSNR (db) and model accuracy (\%) of different transformation configurations for each architecture and dataset.}
           \label{tab:other-results}

\end{table*}

\begin{figure*}
     \centering
     \begin{subfigure}[b]{0.3\linewidth}
         \centering

        \includegraphics[ width=\linewidth]{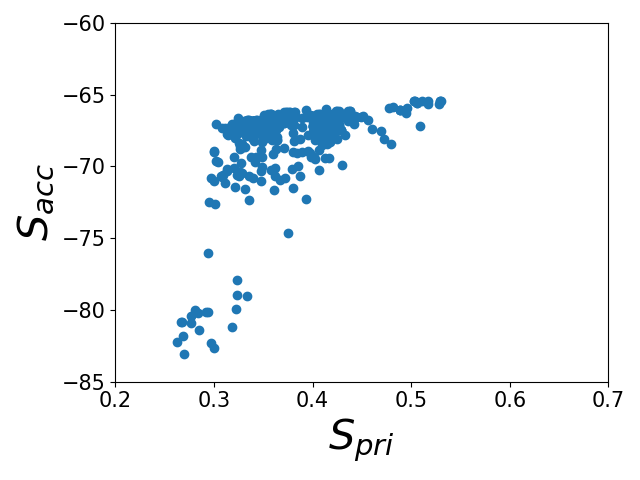}  
        
         \caption{$S_{pri} \in [0.262, 0.530]$}
         \label{fig:privacy-accuracy-distribution1}
     \end{subfigure}
     \begin{subfigure}[b]{0.3\linewidth}
         \centering
        \includegraphics[ width=\linewidth]{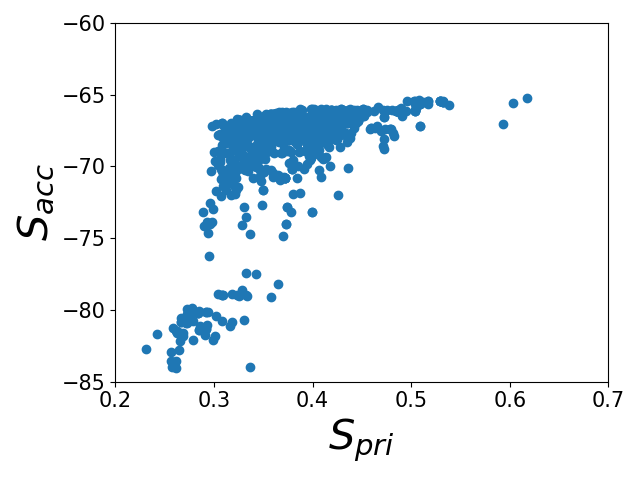} 

         \caption{$S_{pri} \in [0.231, 0.617]$}
         \label{fig:privacy-accuracy-distribution2}
     \end{subfigure}
     \begin{subfigure}[b]{0.3\linewidth}
         \centering
        \includegraphics[ width=\linewidth]{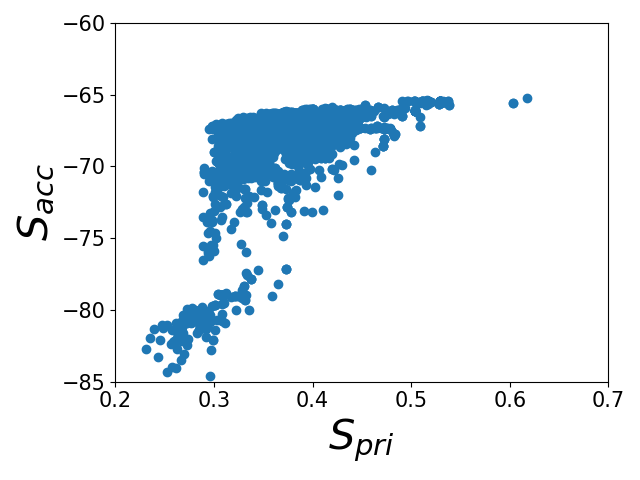}
         \caption{$S_{pri} \in [0.231, 0.617]$}
         \label{fig:privacy-accuracy-distribution3}
     \end{subfigure}

        \caption{$S_{pri}-S_{acc}$ distributes of different numbers of policies.}
        \label{fig:privacy-accuracy-distribution}
\end{figure*}
\section{Recap Search Process}
\label{sec:search-process}

We adopt a simple yet effective search strategy, i.e., \emph{random search}, to find satisfactory transformation policies.
More intelligent search methods, such as evolution algorithm and reinforcement learning, could mitigate meaningless candidates and improve the search efficiency. 
We also investigate the affect of the number of policy candidates to the proposed defense. We test 500, 1500, and 5000 policies and plot the privacy-accuracy distributions in Figure~\ref{fig:privacy-accuracy-distribution}. We observe the distributions of 1500 and 5000 candidates are similar and the privacy score ranges are the same.
This indicates that 1500 policies are enough for searching satisfactory transformation policies.

\section{Proof of Theorem 1}
\begin{theorem}
    Consider two transformation policies $T_1$, $T_2$. Let $x$ be a training sample, and $x^*_1, x^*_2$ be the reconstructed samples via Equation 2 with $T_1$ and $T_2$ respectively. If ${\small \emph{\texttt{GradSim}}(x'(i), T_1(x)) \geq \emph{\texttt{GradSim}}(x'(i), T_2(x))}$ is satisfied for $\forall i \in [0, 1]$, then $T_2$ is more effective against reconstruction attacks than $T_1$, i.e.,
    \begin{equation}\nonumber
    \emph{Pr}[\emph{\texttt{PSNR}}(x^*_1, T_1(x))\geq \epsilon] \geq \emph{Pr}[\emph{\texttt{PSNR}}(x^*_2, T_2(x))\geq \epsilon].
    \end{equation}
\end{theorem}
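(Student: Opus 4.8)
The plan is to reduce the probabilistic claim to a deterministic, per-realization comparison, and then chain together two monotonicity facts that link the interpolation curve $\texttt{GradSim}(x'(i),T(\cdot))$ to the terminal reconstruction quality $\texttt{PSNR}(x^*,T(\cdot))$. First I would couple the randomness of the two attacks: run both reconstructions of Equation~2 from the same random initialization $x_0$ and with the same optimizer randomness (seed, sampling order). Under this coupling, the probabilistic inequality follows once I show, for each fixed realization, that $\texttt{PSNR}(x^*_1, T_1(x)) \geq \texttt{PSNR}(x^*_2, T_2(x))$, since the event $\{\texttt{PSNR}(x^*_2,T_2(x))\geq\epsilon\}$ is then contained in $\{\texttt{PSNR}(x^*_1,T_1(x))\geq\epsilon\}$, giving the domination of threshold-crossing probabilities $\Pr[\cdot\geq\epsilon]$ for the fixed $\epsilon$.

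Next I would establish the first monotonicity: the terminal reconstruction quality is a nondecreasing function of the terminal gradient alignment attained by the optimizer. The attack drives $x'$ to minimize the gradient distance, i.e.\ to maximize $\texttt{GradSim}(x', T(x))$, and a higher terminal alignment corresponds to $x^*$ lying closer to $T(x)$, hence a larger PSNR. I would phrase this as the existence of a nondecreasing $\phi$ with $\texttt{PSNR}(x^*, T(x)) = \phi\big(\texttt{GradSim}(x^*, T(x))\big)$, which turns the event $\{\texttt{PSNR}\geq\epsilon\}$ into a threshold on terminal alignment; the empirical support for this monotone link is the correlation reported in Figure~\ref{fig:privacy-distribution}.

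The third and hardest step is to show that the terminal alignment itself is monotone in the whole interpolation curve. Modeling the trajectory through the parametrization $x'(i)=(1-i)x_0 + i\,T(x)$ of Figure~\ref{fig:motivation}, the optimizer's progress from $x_0$ toward $T(x)$ is governed by the local values of $\texttt{GradSim}(x'(i),T(x))$: along the shared parametrization, the hypothesis $\texttt{GradSim}(x'(i),T_1(x)) \geq \texttt{GradSim}(x'(i),T_2(x))$ for all $i$ supplies at every probe point an alignment for $T_1$ at least as large as for $T_2$, so the optimizer started from the same $x_0$ can only reach a terminal alignment that is at least as high. Writing the terminal alignment as a nondecreasing functional of the curve transfers the pointwise domination to the terminal values, $\texttt{GradSim}(x^*_1, T_1(x)) \geq \texttt{GradSim}(x^*_2, T_2(x))$; composing with $\phi$ and invoking the coupling yields the claim.

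I expect this third step to be the main obstacle. It requires a model of the optimization dynamics in which the straight-line interpolation is a faithful surrogate for the true descent trajectory and in which the terminal value depends monotonically on the curve, whereas pointwise domination of the objective \emph{values} does not by itself constrain the \emph{gradients} that actually drive the descent. I would therefore make the needed regularity explicit as hypotheses, for example that the descent remains within the interpolating tube joining $x_0$ to $T(x)$ and that PSNR is locally monotone in alignment near $T(x)$, so that the argument becomes rigorous rather than merely heuristic.
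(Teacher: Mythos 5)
Your own diagnosis of the third step is correct, and that gap is fatal rather than patchable: the reduction to a per-realization comparison demands strictly more than the theorem asserts, while the hypothesis supplies strictly less than that step needs. Under your coupling you must show that, for \emph{every} shared realization of $x_0$ and optimizer randomness, the attack against $T_1$ terminates at least as well aligned as the attack against $T_2$. But the hypothesis only orders the values of $\texttt{GradSim}$ along the straight segment $x'(i)=(1-i)x_0+i\,T(x)$; the two coupled trajectories leave that segment after the first step and are driven by gradients of two different objectives, $||\bigtriangledown W(x',y)-\bigtriangledown W(T_1(x),y)||$ and $||\bigtriangledown W(x',y)-\bigtriangledown W(T_2(x),y)||$, over two different landscapes. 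Nothing in the hypothesis prevents a realization in which the $T_1$-attack stalls while the $T_2$-attack succeeds; such realizations are fully compatible with the assumed curve ordering and merely need to be outweighed on average for the theorem to hold. In other words, pathwise (coupled) domination can fail even when the stated stochastic domination is true, so the auxiliary hypotheses you propose to add (trajectories confined to the interpolating tube, terminal alignment a monotone functional of the curve, a deterministic nondecreasing $\phi$ with $\texttt{PSNR}=\phi(\texttt{GradSim})$ --- note that Figure~\ref{fig:privacy-distribution} reports only a Pearson correlation of $0.697$, not a functional relation) would amount to assuming a strengthened, deterministic form of the very conclusion you want.

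The paper's proof never goes per-realization, which is how it escapes this obstacle. Writing $f_T(i)=\texttt{GradSim}(x'(i),T(x))$, it models the attack as a chain of local moves along the interpolation whose transition probability $Pr(T,x'(i),x'(i+\bigtriangleup i))$ is proportional to $f_T'(i)$, and it identifies the overall success probability with the double integral $\int_0^1\int_0^i Pr(T,x'(z),x'(z+\bigtriangleup z))\,\mathrm{d}z\,\mathrm{d}i$, which under that model is proportional to the area $\int_0^1 f_T(i)\,\mathrm{d}i$ under the $\texttt{GradSim}$ curve. From there the argument is pure monotonicity of integration: $f_{T_1}\geq f_{T_2}$ pointwise gives $\int_0^1 f_{T_1}(i)\,\mathrm{d}i \geq \int_0^1 f_{T_2}(i)\,\mathrm{d}i$, hence the ordering of success probabilities --- no coupling, no terminal-alignment-to-PSNR map, no claim about individual runs. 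That proof is itself a modeling argument rather than a rigorous one, but its assumptions are placed on per-step transition probabilities, exactly the granularity at which a probabilistic conclusion can be integrated up, rather than on the deterministic dynamics of coupled trajectories, which is precisely where your version cannot be repaired without begging the question.
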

\begin{proof}
    Since distance-based reconstruction attacks adopt gradient decent based optimization algorithms, the distance between two adjacent points $f_{T}(i)$, $f_{T}(i+\bigtriangleup i)$ is positively correlated with $Pr(T, x'(i), x'(i+\bigtriangleup i))$, where $Pr(T, x'(i), x'(i+\bigtriangleup i))$ is the probability that the adversary can reconstruct $x'(i+\bigtriangleup i)$ from $x'(i)$, $$x'(i+\bigtriangleup i) = \frac{K-i-\bigtriangleup i}{K} * x_0 + \frac{i+\bigtriangleup i}{K} * T(x).$$ In particular, if $|f_{T}(i) - f_{T}(i+\bigtriangleup i)|$ is larger, the gradient decent based optimization is more likely to update $x'(i)$ to $x'(i+\bigtriangleup i)$. 
    
    Without loss of generality, we assume the derivative of $f_{T}(i)$, $f_{T}'(i)$ is differentiable and $f_{T}'(i) \varpropto Pr(T, x'(i), x'(i+\bigtriangleup i))$.  Then, we have
    \begin{equation}
        f_{T}(i) = \int_{0}^{i} f_{T}'(z) \,\mathrm{d}z.
    \end{equation}
    and 
    \begin{equation}
        f_{T}(i) \varpropto \int_{0}^{i} Pr(T, x'(z), x'(z+\bigtriangleup z))\,\mathrm{d}z.
    \end{equation}

    Thus,
    \begin{equation}
        \int_{0}^{1} f_{T}(i) \,\mathrm{d}i \varpropto \int_{0}^{1} \int_{0}^{i} Pr(T, x'(z), x'(z+\bigtriangleup z))\,\mathrm{d}z\mathrm{d}i,
    \end{equation}
    where $$\emph{Pr}[\emph{\texttt{PSNR}}(x^*, T(x))\geq \epsilon] = \int_{0}^{1} \int_{0}^{i} Pr(T, x'(z), x'(z+\bigtriangleup z))\,\mathrm{d}z\mathrm{d}i.$$

    Since $f_{T_1}(i) \geq f_{T_2}(i)$ for $\forall i \in [0, 1]$, 
    \begin{equation}
        \int_{0}^{1} f_{T_1}(i) \,\mathrm{d}i \geq \int_{0}^{1} f_{T_2}(i) \,\mathrm{d}i,
    \end{equation}
    and therefore, 
    \begin{equation}
        \emph{Pr}[\emph{\texttt{PSNR}}(x^*_1, T_1(x))\geq \epsilon] \geq \emph{Pr}[\emph{\texttt{PSNR}}(x^*_2, T_2(x))\geq \epsilon].
    \end{equation}
\end{proof}

\section{Adaptive Attack}

Actually the general adaptive attack towards our privacy preserving is not easy to design, we appreciate one of reviewers points out that 
 the adversary can possibly initialize the starting image $x_{0}$ as black pixels when the shift transformation is applied. The results are shown in \ref{tab:cifar100-zero}
 
\begin{table}[htp]
\begin{center}
\resizebox{0.45\textwidth}{!}{
\begin{tabular}{|c|c|c|c|}
\hline
\textbf{Model}     & \textbf{Transformation}  & \textbf{Initialized to} & \textbf{Initialized to} \\
& \textbf{Policy} & \textbf{random values} & \textbf{black pixels} \\\hline
ResNet20  & 3-1-7   & 6.58   & 6.64 \\ \hline
ConvNet & 21-13-3 & 5.76   & 5.78 \\ \hline
\end{tabular}}
\end{center}
\caption{PSNR (db) of different initialization for each architecture on CIFAR100.}
\label{tab:cifar100-zero}
\end{table}

\section{More Visualization Results}
We show more original images in CIFAR100 (Figure \ref{fig:more-images-ori}) and the corresponding reconstructed images under IG without any transformation (Figure \ref{fig:attack-images-no-transform}). We also illustrate the transformed original images with the hybrid policy adopted for protecting ResNet20 on CIFAR100 (Figure \ref{fig:more-images-transform}) and the corresponding reconstructed images under IG with the transformation policy (Figure \ref{fig:attack-images-transform}). The batch size of the above collaborative training processes is 1. We further visually show the corresponding figures (Figure \ref{fig:more-images-ori-8}-\ref{fig:attack-images-transform-8}) when the batch size is 8. In consideration of the expensive cost of attacking ImageNet~\cite{deng2009imagenet}, we only present several visualized images in \ref{fig:imagenet}.

\begin{figure*} [t]
    \begin{center}
    \begin{tabular}{c@{\hspace{0.01\linewidth}}c@{\hspace{0.01\linewidth}}c@{\hspace{0.01\linewidth}} c@{\hspace{0.01\linewidth}} }

        \includegraphics[ width=0.23\linewidth]{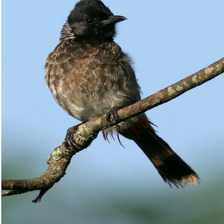} &
        \includegraphics[ width=0.23\linewidth]{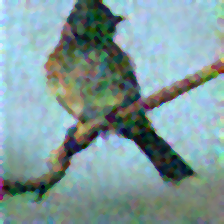} & 
        \includegraphics[ width=0.23\linewidth]{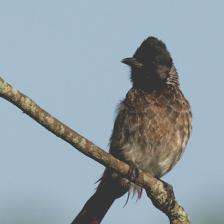} &
        \includegraphics[ width=0.23\linewidth]{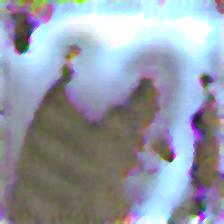}  \\

        \includegraphics[ width=0.23\linewidth]{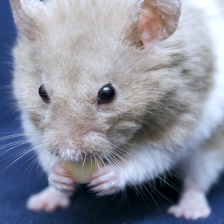} &
        \includegraphics[ width=0.23\linewidth]{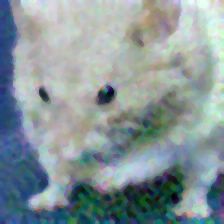} & 
        \includegraphics[ width=0.23\linewidth]{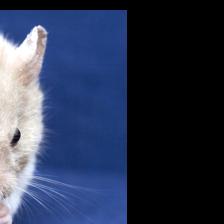} &
        \includegraphics[ width=0.23\linewidth]{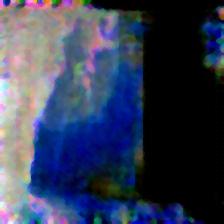} \\

        \includegraphics[ width=0.23\linewidth]{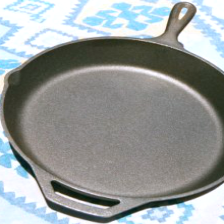} &
        \includegraphics[ width=0.23\linewidth]{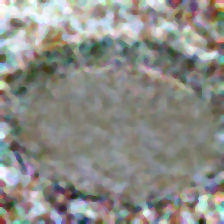} & 
        \includegraphics[ width=0.23\linewidth]{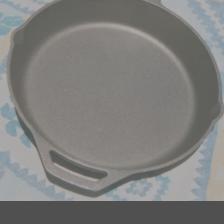} &
        \includegraphics[ width=0.23\linewidth]{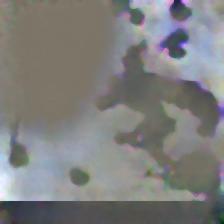} \\
        original & reconstruct & w/ transformation & recover from transformation
        
    \end{tabular}
    \end{center}
    
    \vspace{-1em}
    \caption{Visualization ImageNet results on untrained ResNet18 architecture. The transformation policy is 3-1-7+43-18-18.  }
    \label{fig:imagenet}
\end{figure*}

\clearpage
\begin{figure*}
    \centering
        \includegraphics[ width=0.8\linewidth]{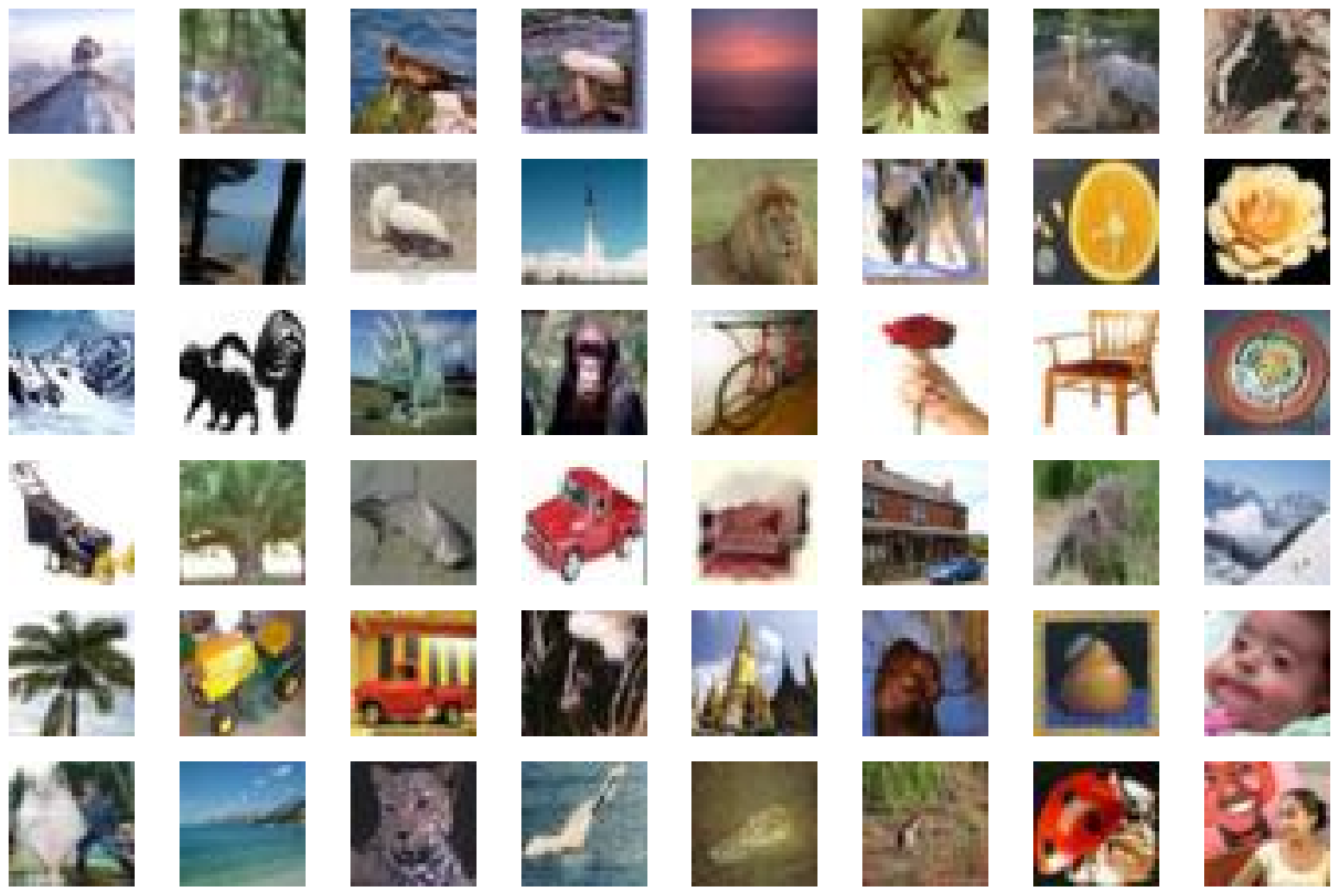}
    \caption{The original images in CIFAR100.} 
    \label{fig:more-images-ori}
\end{figure*}

\begin{figure*}
    \centering
        \includegraphics[ width=0.8\linewidth]{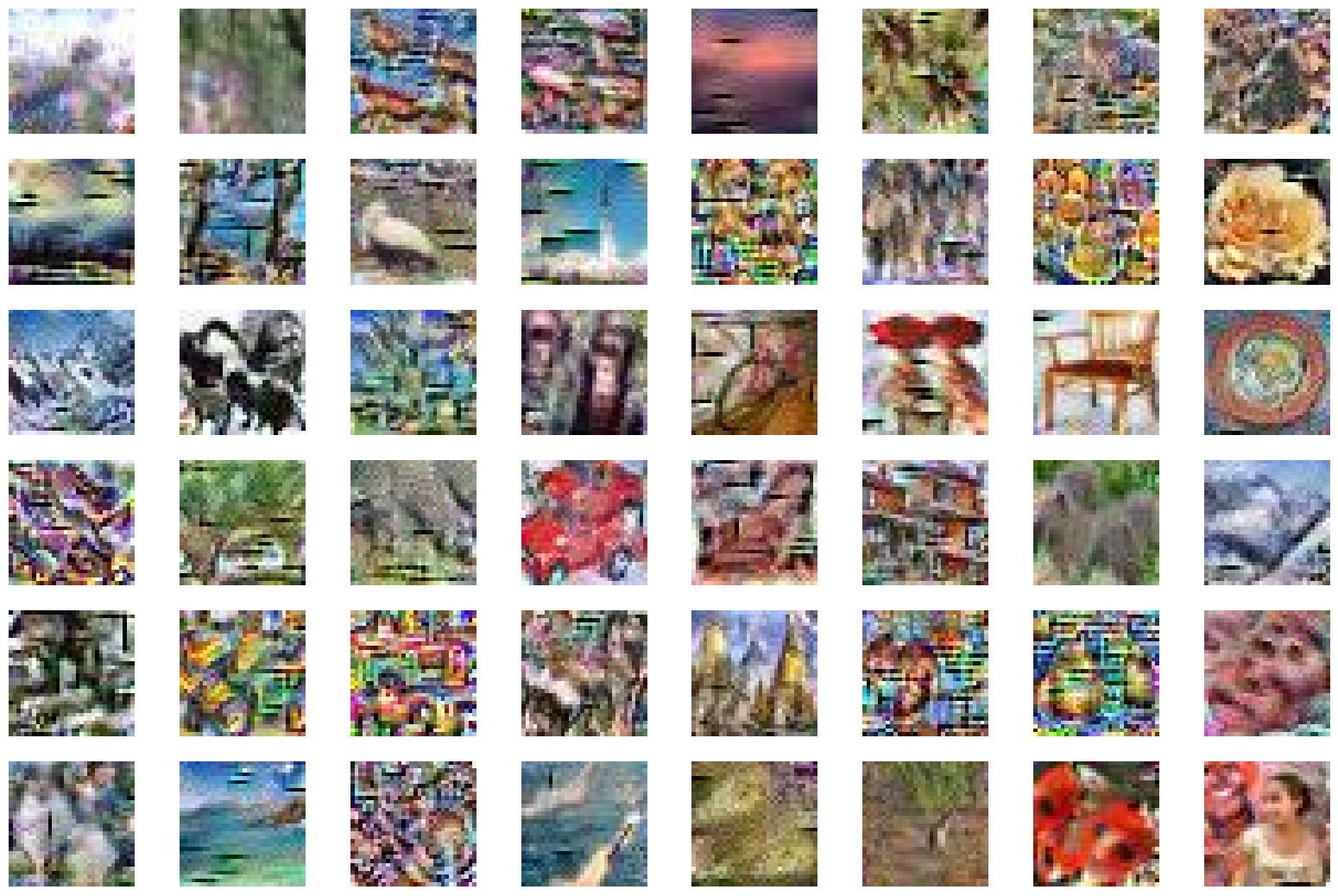}
    \caption{The reconstructed images of IG under ResNet20, CIFAR100 w/o transformation policy.} 
    \label{fig:attack-images-no-transform}
\end{figure*}

\begin{figure*}
    \centering
        \includegraphics[ width=0.8\linewidth]{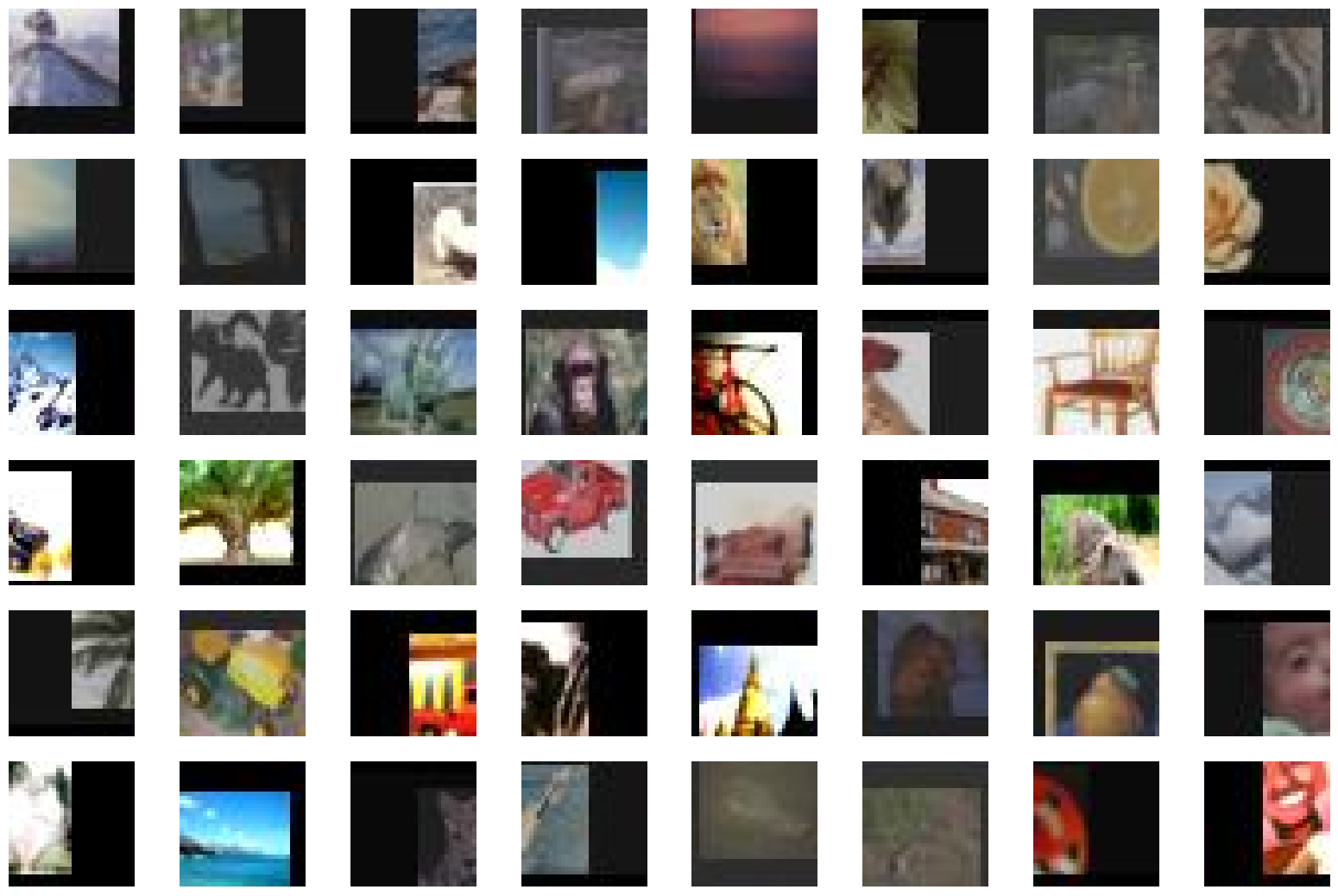}
    \caption{The original images in CIFAR100 w/ the privacy-preserving transformation policy.} 
    \label{fig:more-images-transform}
\end{figure*}

\begin{figure*}
    \centering
        \includegraphics[ width=0.8\linewidth]{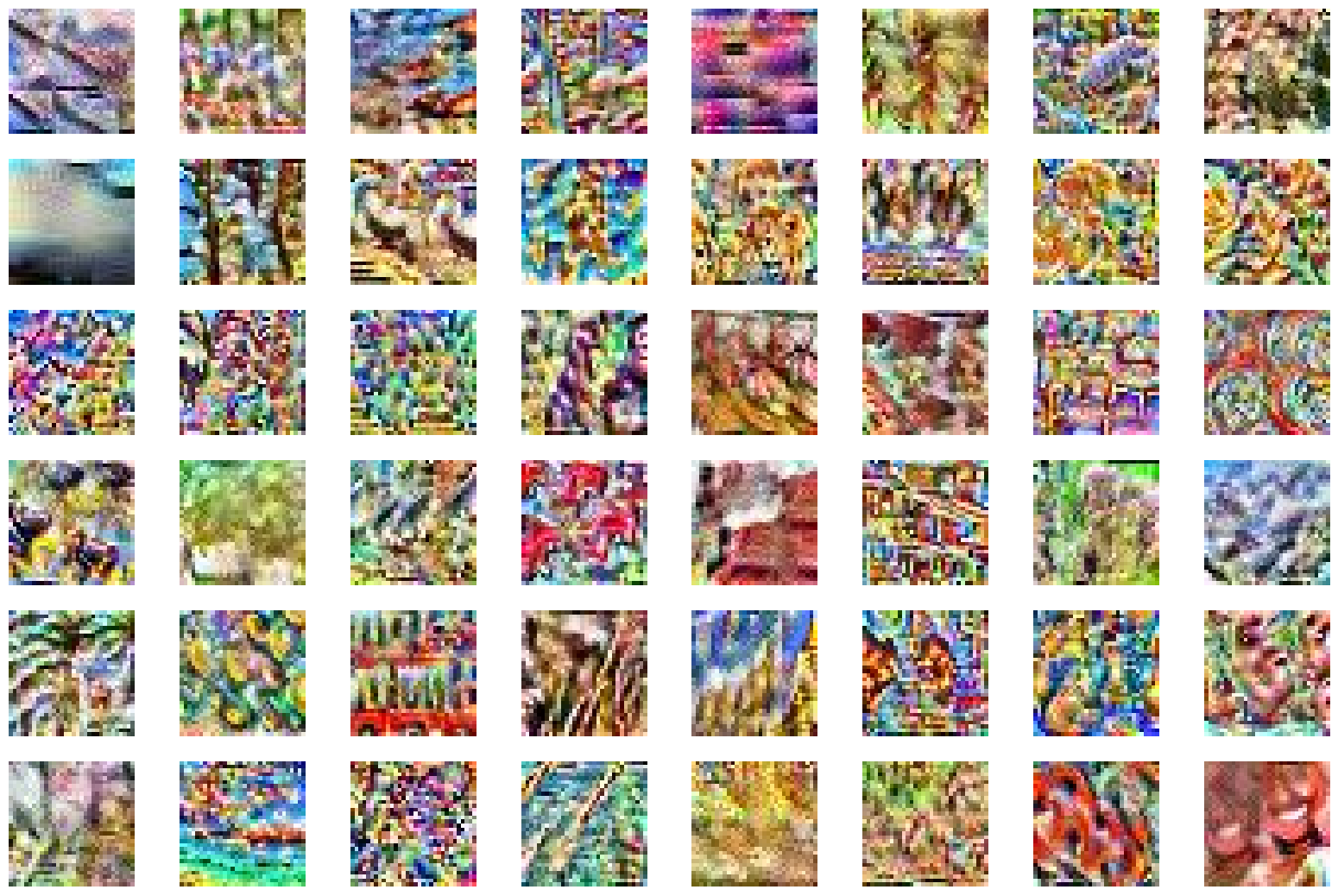}
    \caption{The reconstructed images of IG under ResNet20, CIFAR100 w/ the privacy-preserving transformation policy.} 
    \label{fig:attack-images-transform}
\end{figure*}

\begin{figure*}
    \centering
        \includegraphics[ width=0.8\linewidth]{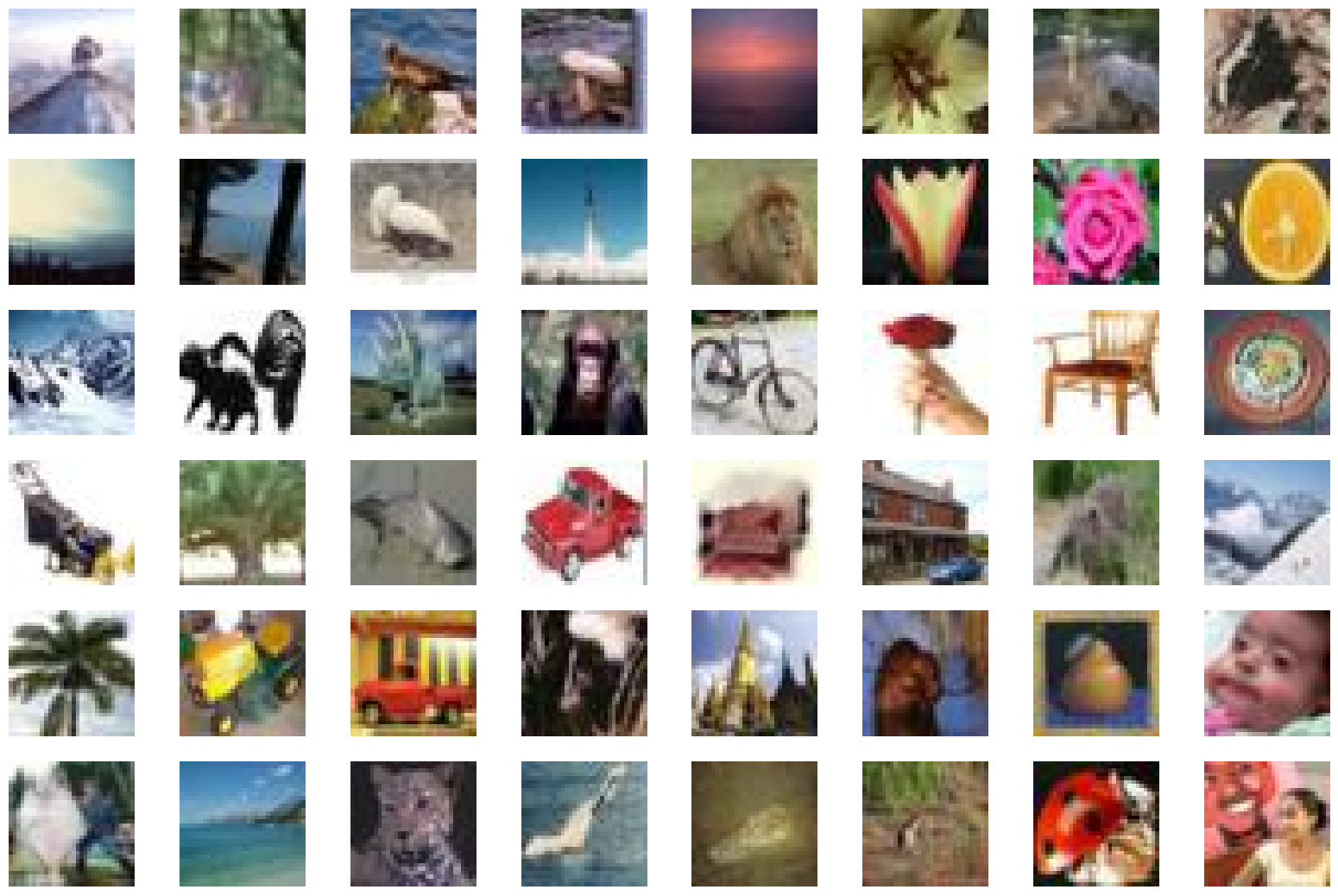}
    \caption{The original images in CIFAR100.} 
    \label{fig:more-images-ori-8}
\end{figure*}

\begin{figure*}
    \centering
        \includegraphics[ width=0.8\linewidth]{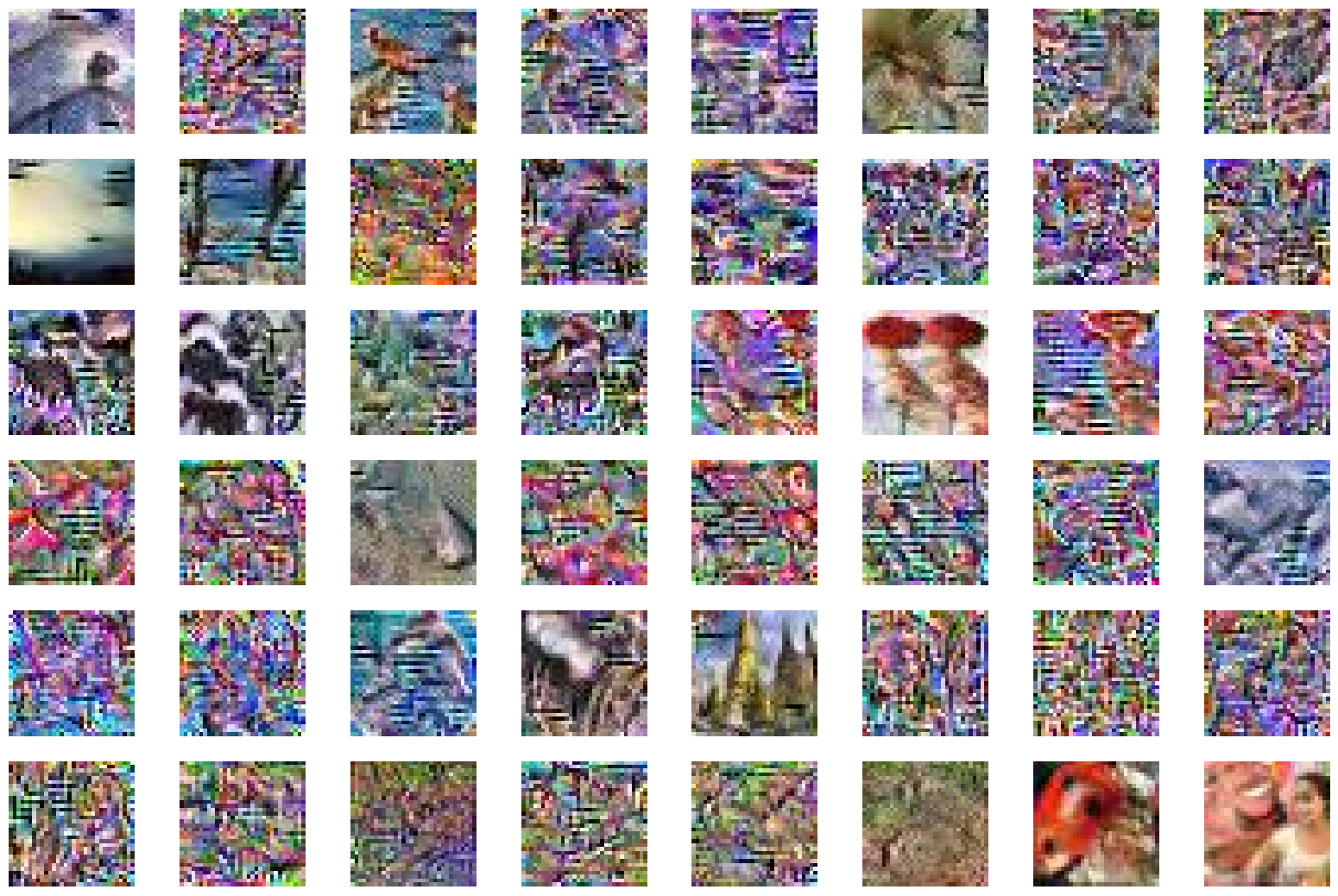}
    \caption{The reconstructed images of IG under ResNet20, CIFAR100 w/o transformation policy (batch size=8).} 
    \label{fig:attack-images-no-transform-8}
\end{figure*}

\begin{figure*}
    \centering
        \includegraphics[ width=0.8\linewidth]{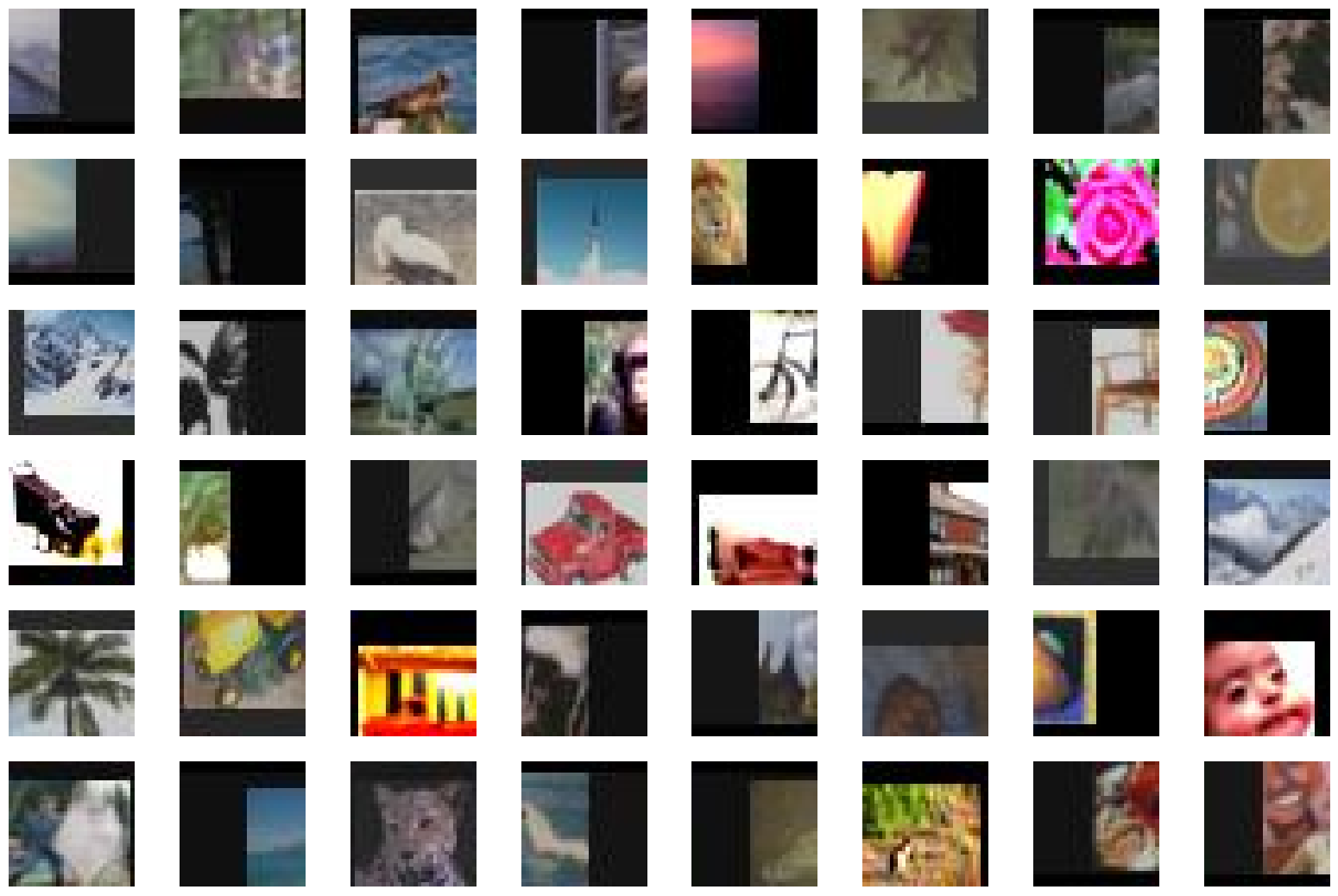}
    \caption{The original images in CIFAR100 w/ the privacy-preserving transformation policy.} 
    \label{fig:more-images-transform-8}
\end{figure*}

\begin{figure*}
    \centering
        \includegraphics[ width=0.8\linewidth]{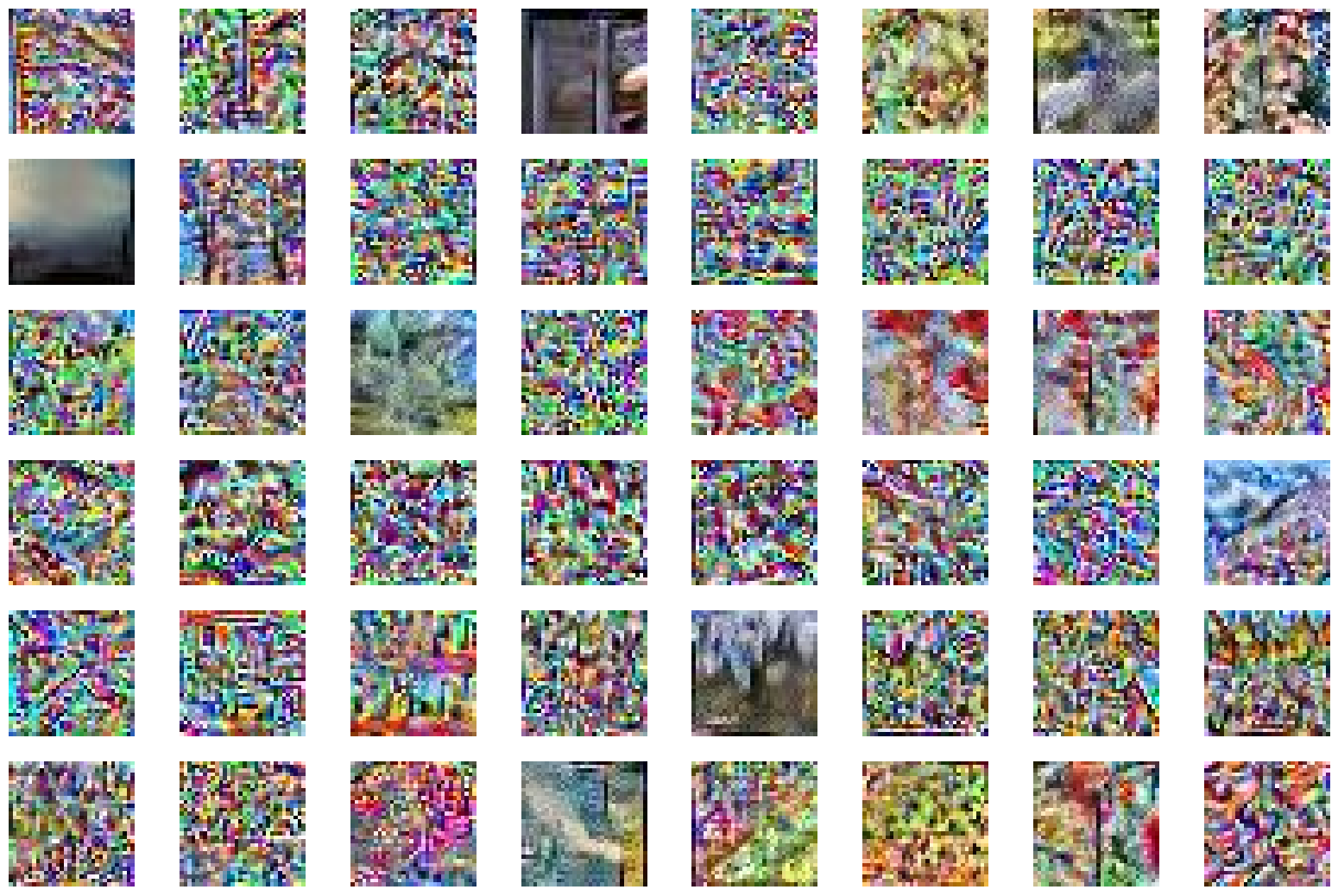}
    \caption{The reconstructed images of IG under ResNet20, CIFAR100 w/ the privacy-preserving transformation policy (batch size=8). } 
    \label{fig:attack-images-transform-8}
\end{figure*}

\end{document}